\newif\ifdraft
\newtheorem{theorem}{Theorem}
\newtheorem{definition}{Definition}
\newtheorem{example}{Example}
\begin{document}

\title{Managing Change in Graph-structured Data Using Description Logics
(long version with appendix)
   \thanks{This paper is a extended version of \cite{acos-aaai14} that
     contains an appendix with proofs.}
}

\author{\name Shqiponja Ahmetaj
 \email  ahmetaj@dbai.tuwien.ac.at \\
 \addr Vienna University of Technology, Austria
\AND
\name Diego Calvanese
 \email  calvanese@inf.unibz.it \\
 \addr Free University of Bozen-Bolzano, Italy
 \AND
 \name Magdalena Ortiz
 \email ortiz@kr.tuwien.ac.at \\
 \addr Vienna University of Technology, Austria
 \AND
 \name Mantas \v{S}imkus
 \email simkus@dbai.tuwien.ac.at \\
 \addr Vienna University of Technology, Austria
}

\maketitle

\begin{abstract}
  In this paper,
  we consider the setting of graph-structured data that evolves as a result of
  operations carried out by users or applications.  We study different
  reasoning problems, which range from ensuring the satisfaction of a given set
  of integrity constraints after a given sequence of updates, to deciding the
  (non-)existence of a sequence of actions that would take the data to an
  (un)desirable state, starting either from a specific data instance or from an
  incomplete description of it.  We consider
  an action language in which actions are finite sequences of conditional
  insertions and deletions of nodes and labels, and use Description Logics for
  describing integrity constraints and (partial) states of the data.  We then
  formalize the above data management problems as a static verification problem
  and several planning problems.  We provide algorithms and tight complexity
  bounds for the formalized problems, both for an expressive DL and for a
  variant of DL-Lite.
\end{abstract}

\section{Introduction}
\label{sec:introduction}

The complex structure and increasing size of information that has to be managed
in today's applications calls for flexible mechanisms for storing such
information, making it easily and efficiently accessible, and facilitating its
change and evolution over time.  The paradigm of \emph{graph structured data}
(GSD) \cite{SaPa11}
has gained popularity recently\footnote{Graph structured data models have their
 roots in work done in the early '90s, see, e.g., \cite{CoMe90}.} as an
alternative to traditional relational DBs that provides more flexibility and
thus can overcome the limitations of an a priori imposed rigid structure on the
data.  Indeed, differently from relational data, GSD do not require a schema to
be fixed a priori.  This flexibility makes them well suited for many emerging
application areas such as managing Web data, information integration,
persistent storage in object-oriented software development, or management of
scientific data.  Concrete examples of models for GSD are RDFS~\cite{W3Crec-RDF-Schema},
object-oriented data models, and XML.

In GSD, information is represented by means of a node and edge labeled graph, in
which the labels convey semantic information.  The representation structures
underlying many standard knowledge representation formalisms, and in particular
Description Logics (DLs) \cite{BCMNP03} are paradigmatic examples of GSD.
Indeed, in DLs
the domain of interest is modeled by means of unary relations (a.k.a.\
\emph{concepts}) and binary relations (a.k.a.\ \emph{roles}), and hence the
first-order interpretations of a DL knowledge base (KB) can be viewed as node
and edge labeled graphs.  DLs have been advocated as a proper tool for data
management \cite{Lenz11}, and are very natural for describing complex
knowledge about domains represented as GSD.  A DL KB comprises an
assertional component, called \emph{ABox}, which is often viewed as a
possibly incomplete instance of
GSD, and a logical theory called terminology or \emph{TBox}, which can be used
to infer implicit information from the assertions in the ABox.  An alternative
possibility is to view the \emph{finite} structures over which DLs are
interpreted as (complete) GSD, and the KB as a description of constraints and
properties of the data.  Taking this view, DLs have been applied, for
example, for the static analysis of traditional data models, such as UML class
diagrams \cite{BeCD05} and Entity Relationship schemata \cite{ACKRZ07b}.
Problems such as the consistency of a diagram are reduced to KB satisfiability
in a suitable DL, and DL reasoning services become tools for managing GSD.

In this paper, we follow the latter view, but aim at using DLs not
only for static reasoning about data models, but also for reasoning
about the evolution and change over time of GSD that happens as the result of
executing actions. The development of automated tools to
support such tasks is becoming a pressing problem, given the large
amounts and complexity of GSD currently available. 
Having tools to
understand the properties and effects of actions is important and
provides added value for many purposes, including application
development, integrity preservation, security, and optimization.
Questions of interest are, e.g.:
\begin{asparaitem}
\item Will the execution of a given action \emph{preserve} the integrity
  constraints,  for every initial data instance?
\item Is there a sequence of actions that leads a given data instance into
  a state where some property (either desired or not) holds?
\item Does a given sequence of actions lead every possible initial
  data instance into a state where some property necessarily holds?
\end{asparaitem}

The first question is analogous to a classic problem in relational databases:
verifying \emph{consistency} of database transactions.  The second and third
questions are classic questions in AI (called \emph{planning} and
\emph{projection}, respectively).

In this paper we address these and other related questions, develop tools to
answer them, and characterize the computational properties of the underlying
problems.  The role of DLs in our setting is manifold, and we propose a very
expressive DL that is suitable for:
\begin{inparaenum}[\it (i)]
\item modeling sophisticated domain knowledge,
\item specifying conditions on the state that should be reached (goal state),
  and
\item specifying actions to evolve GSD over time.
\end{inparaenum}
For the latter, we introduce a simple yet powerful language in which actions
are finite sequences of (possibly conditional) insertions and deletions
performed on concepts and roles, using complex DL concepts and roles as
queries. Our results are quite general and allow for analyzing data evolution
in several practically relevant settings, including RDF data under constraints
expressed in RDFS or OWL. Via the standard reification technique~\cite{BeCD05},
they also apply to the more traditional setting of relational data under
schemas expressed in conceptual models (e.g., ER schemas, or UML class
diagrams), or to object-oriented data.

In this setting, we address first the \emph{static verification
  problem}, that is, the problem of verifying whether for every
possible state satisfying a given set of constraints (\ie a given KB), the
constraints are
still satisfied in the state resulting from the execution of a given
(complex) action.  We develop a novel technique similar in spirit to
\emph{regression} in reasoning about actions \cite{LRLLS97}, and are
able to show that static verification is decidable.  We provide tight
complexity bounds for it, using two different DLs as domain languages.
Specifically, we provide a tight \textsc{coNExpTime} bound for the
considered expressive DL, and a tight co\np bound for a variation of
\textit{DL-Lite} \cite{CDLLR07}.
For our setting, we then study different variants of planning.  We define a
plan as a sequence of actions that leads a given structure into a state where
some property (either desired or not) holds. Then we study problems such as
deciding the existence of a plan, both for the case where the initial structure
is fully known, and where only a partial description of it is available, and
deciding whether a given sequence of actions is always a plan for some goal.
Since the existence of a plan (of unbounded length) is undecidable in general,
even for lightweight DLs and resctricted actions, we also study 
plans of bounded length. We provide tight complexity bounds for the different
considered variants of the problem, both for lightweight and for expressive
DLs.  This paper adds an appendix with proofs to \cite{acos-aaai14},
some of the results were published in preliminary form \cite{CaOS13}.

\section{An Expressive DL for Modeling GSD}
\label{sec:preliminaries}

We now define the DL \OURDL, used to express constraints on GSD.  It extends
the standard \ALCHOIQ with Boolean combinations of axioms, a constructor for a
singleton role, union, difference and restrictions of roles, and variables as
place-holders for individuals. The importance of these constructors will be
become clear in Sections~\ref{sec:actions} and~\ref{sec:capturing}.

We assume countably infinite sets $\rolenames$ of \emph{role names},
$\conceptnames$ of \emph{concept names}, $\indivnames$ of \emph{individual
 names}, and $\varnames$ of \emph{variables}. \emph{Roles} are defined
inductively:
\begin{inparaenum}[\it (i)]
\item if $p\in \rolenames$, then $p$ and $p^-$ (the \emph{inverse} of $p$) are
  roles;
\item if $\{t,t'\}\subseteq \indivnames\cup \varnames$, then $\{(t_1,t_2)\}$ is
  also a role;
\item if $r_1,r_2$ are roles, then 
  $r_1 \ROR r_2 $, and $r_1 \RDIFF r_2$ are also roles; and
\item if $r$ is a role and $C$ is a concept, then $r|_C$ is a role.
\end{inparaenum}
\emph{Concepts} are defined inductively as well:
\begin{inparaenum}[\it (i)]
\item if $A\in \conceptnames$, then $A$ is a concept;
\item if $t\in \indivnames\cup \varnames$, then $\{t\}$ is a concept
  (called \emph{nominal});
\item if $C_1$, $C_2$ are concepts, then $C_1 \AND C_2$, $C_1 \OR C_2$, and
  $\NOT C_1$ are also concepts;
\item if $r$ is a role, $C$ is a concept, and $n$ is a non-negative
  integer, then $\SOME{r}{C}$, $\ALL{r}{C}$, $\Atmostq{n}{r}{C}$, and
  $\Atleastq{n}{r}{C}$ are also concepts.
\end{inparaenum}

A \emph{concept} (resp., \emph{role}) \emph{inclusion} is an expression of the
form $\alpha_1\ISA \alpha_2$, where $\alpha_1,\alpha_2$ are concepts (resp.,
roles).  Expressions of the form $t:C$ and $(t,t'):r$, where $\{t,t'\}\subseteq
\indivnames\cup \varnames$, $C$ is a concept, and $r$ is a role, are called
\emph{concept assertions} and \emph{role assertions}, respectively. Concepts,
roles, inclusions, and assertions that have no variables are called
\emph{ordinary}.
We define (\OURDL-)\emph{formulae} inductively:
\begin{inparaenum}[\it (i)]
\item every inclusion and every assertion is a formula;
\item if $\K_1$, $\K_2$ are formulae, so are $\K_1\land\K_2$, $\K_1\lor\K_2$,
  and $\kbneg \K_1$.
\end{inparaenum}
A formula $\K$ with no variables is called \emph{knowledge base (KB)}.


As usual in DLs, the semantics is given in terms of interpretations.  An
\emph{interpretation} is a pair $\I=\tuple{\dom{\I},\Int{\I}{\cdot}}$ where
$\dom{\I} \neq \emptyset$ is the \emph{domain}, $\Int{\I}{A} \subseteq
\dom{\I}$ for each $A\in\conceptnames$,
$\Int{\I}{r}\subseteq\dom{\I}\times\dom{\I}$ for each $r\in\rolenames$, and
$\Int{\I}{o}\in \dom{\I}$ for each $o\in \indivnames$.  For the ordinary roles
of the form $\{(o_1,o_2)\}$, we let
$\Int{\I}{\{(o_1,o_2)\}}=\{(\Int{\I}{o_1},\Int{\I}{o_2})\}$, and for ordinary
roles of the form $r|_{C}$, we let $\INT{\I}{r|_{C}} = \{(e_1,e_2) \mid
(e_1,e_2) \in \Int{\I}{r} \text{ and } e_2 \in \Int{\I}{C} \}$. The function
$\Int{\I}{\cdot}$ is extended to the remaining ordinary concepts and roles in
the usual way, see \cite{BCMNP03}.
Assume an interpretation $\I$.  For an ordinary inclusion $\alpha_1\ISA
\alpha_2$, $\I$ \emph{satisfies} $\alpha_1\ISA \alpha_2$ (in symbols,
$\I\models \alpha_1\ISA \alpha_2$) if $\alpha_1^{\I}\subseteq \alpha_2^{\I}$.
For an ordinary assertion $\beta=o:C$ (resp., $\beta=(o_1,o_2):r$), $\I$
\emph{satisfies} $\beta $ (in symbols, $\I\models \beta$) if $o^{\I}\in C^{\I}$
(resp., $(o_1^{\I},o_2^{\I})\in r^{\I}$).  The notion of satisfaction is
extended to knowledge bases as follows:
\begin{inparaenum}[\it (i)]
\item $\I\models \K_1\land \K_2$ if $\I\models \K_1$ and $\I\models \K_2$;
\item $\I\models \K_1\lor \K_2$ if $\I\models \K_1$ or $\I\models \K_2$;
\item $\I\models \kbneg \K$ if $\I\not \models \K$.
\end{inparaenum}
If $\I\models \K$, then $\I$ is a \emph{model} of $\K$.  The \emph{finite
 satisfiability} (resp., \emph{unsatisfiability}) \emph{problem} is to decide
given a KB $\K$ if there exists (resp., doesn't exist) a model $\I$ of $\K$
with $\dom{\I}$ finite.

A \nexptime lower bound for finite satisfiability in \OURDL follows from the
work of Tobies \cite{Tobies00}.  Using well-known techniques due to
Borgida \cite{Borg96}, a matching upper bound can be shown by a direct
translation into the two variable fragment with counting, for which finite
satisfiability is in \nexptime \cite{Pratt-Hartmann05}. Hence, the finite
satisfiability problem for \OURDL KBs has the same computational complexity as
for the standard \ALCHOIQ:

\begin{theorem}
  \label{thm:dl-sat}
  Finite satisfiability of \OURDL KBs is \nexptime-complete.
\end{theorem}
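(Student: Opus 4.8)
The plan is to establish the two bounds separately, since the theorem asserts \textsc{NExpTime}-completeness and the hardness part is already attributed to Tobies.

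\medskip

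\noindent\textbf{Lower bound.} For hardness, I would simply invoke the result of Tobies~\cite{Tobies00}, which gives \textsc{NExpTime}-hardness already for finite satisfiability of \ALCHOIQ (or even a fragment of it, e.g.\ \ALCQI with nominals). Since every \ALCHOIQ KB is syntactically an \OURDL KB (the additional constructors of \OURDL---Boolean combinations of axioms, singleton roles $\{(t,t')\}$, role union/difference/restriction, and variables---only extend the language), finite satisfiability of \OURDL KBs is at least as hard. No real work is needed here beyond observing that the reduction target embeds into \OURDL.

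\medskip

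\noindent\textbf{Upper bound.} The heart of the proof is a polynomial-time translation of an \OURDL KB $\K$ into an equisatisfiable (over finite models) formula $\varphi_\K$ of $\mathcal{C}^2$, the two-variable fragment of first-order logic with counting quantifiers, for which finite satisfiability is in \textsc{NExpTime} by Pratt-Hartmann~\cite{Pratt-Hartmann05}. The translation follows the standard DL-to-$\mathcal{C}^2$ recipe of Borgida~\cite{Borg96}: concept names become unary predicates, role names become binary predicates, and the inductive ``standard translation'' $\tau_x(C)$, $\tau_{x,y}(r)$ maps each concept to a formula with one free variable and each role to a formula with two free variables, reusing the two variables $x,y$ by swapping them on inverse roles. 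I would spell out the nonstandard cases: a nominal $\{o\}$ translates to $x = o$ (or, since $\mathcal{C}^2$ without constants still suffices, to a fresh unary predicate $P_o$ asserted to be a singleton via $\exists^{=1}x.\,P_o(x)$); the singleton role $\{(o_1,o_2)\}$ translates analogously using $x=o_1 \land y = o_2$; role union and difference translate to $\lor$ and $\land\lnot$ of the component role translations; role restriction $r|_C$ translates to $\tau_{x,y}(r) \land \tau_y(C)$; number restrictions $\mathord{\geq} n\,r.C$ and $\mathord{\leq} n\,r.C$ translate to counting quantifiers $\exists^{\geq n} y.(\tau_{x,y}(r)\land\tau_y(C))$ and its dual. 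Since \OURDL-formulae allow Boolean combinations of inclusions and assertions, I map an inclusion $\alpha_1 \ISA \alpha_2$ to $\forall x.(\tau_x(\alpha_1)\to\tau_x(\alpha_2))$ for concepts, resp.\ $\forall x\forall y.(\tau_{x,y}(\alpha_1)\to\tau_{x,y}(\alpha_2))$ for roles, an assertion $o:C$ to $\tau_x(C)[x/o]$, and then push the KB-level connectives $\land,\lor,\kbneg$ through directly (they become the corresponding first-order connectives at the sentence level). One checks by a routine induction on the structure of concepts, roles, and formulae that $\I \models \K$ iff $\I \models \varphi_\K$ under the natural correspondence between DL interpretations and first-order structures, so $\K$ is finitely satisfiable iff $\varphi_\K$ is. Since $|\varphi_\K|$ is linear in $|\K|$ and $\varphi_\K\in\mathcal{C}^2$, finite satisfiability of \OURDL is in \textsc{NExpTime}.

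\medskip

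\noindent\textbf{Main obstacle.} The translation itself is entirely routine; the only point that needs care is that $\mathcal{C}^2$ has exactly two variables, so one must verify that every \OURDL constructor---in particular the singleton role $\{(t,t')\}$, role difference, and role restriction, which are not present in plain \ALCHOIQ---can be expressed without introducing a third variable. This works because each of these operates on at most a pair of elements at a time, so the two-variable discipline (use $x$ for the ``source'' of a concept/role translation and $y$ for the ``target'', swapping on inverses) is never violated. A secondary subtlety is handling individual names/nominals within the strict two-variable, constant-free version of $\mathcal{C}^2$ used in Pratt-Hartmann's bound; this is the well-known trick of replacing each individual name $o$ by a fresh concept name $P_o$ together with the axiom $\top \ISA \mathord{\leq} 1 \top.P_o \land \exists^{\geq 1}\top.P_o$ forcing $P_o$ to denote a singleton, and correspondingly rewriting assertions and nominal occurrences---so no essential difficulty arises, only bookkeeping.
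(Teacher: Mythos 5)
Your proposal is correct and follows essentially the same route as the paper, which itself only sketches the argument: \textsc{NExpTime}-hardness inherited from Tobies via the \ALCHOIQ fragment, and the upper bound by a Borgida-style linear translation into the two-variable fragment with counting, whose finite satisfiability is in \textsc{NExpTime} by Pratt-Hartmann. You in fact supply more detail than the paper does, in particular the two-variable treatment of the extra constructors (singleton roles, role union/difference/restriction, Boolean combinations of axioms) and the singleton-predicate encoding of individual names, all of which is the intended bookkeeping.
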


We are interested in the problem of \emph{effectively} managing GSD satisfying
the knowledge represented in a DL KB $\K$.  Hence, we must assume that such
data are of \emph{finite} size, i.e., they correspond naturally to \emph{finite
 interpretations that satisfy the constraints} in $\K$.  In other words, we
consider configurations of the GSD that are finite models of $\K$.

\section{Updating Graph Structured Data}
\label{sec:actions}

We now define an action language for manipulating GSD, i.e., finite
interpretations. The basic actions allow one to insert or delete individuals
from extensions of concepts, and pairs of individuals from extensions of roles.
The candidates for additions and deletions are instances of complex concepts
and roles. Since our DL supports nominals $\{o\}$ and singleton roles
$\{(o,o')\}$, actions can be defined to add/remove a single individual to/from
a concept, or a pair of individuals to/from a role.
We allow also for action composition and conditional actions.  Note that the
action language introduced here is a slight generalization of the one in
\cite{CaOS13}.

\begin{definition}[Action language]
  A \emph{basic action} $\beta$ is defined by the following grammar:
  \[
    \beta ~\longrightarrow~ (A \oplus C) ~\mid~ (A \ominus C) ~\mid~
    (p \oplus r) ~\mid~ (p \ominus r),
  \]
  where $A$ is a concept name,  $C$ is an arbitrary concept, $p$ is a role
  name, and $r$ is an arbitrary role.
  Then \emph{(complex) actions} are given by
  the following grammar:
  \[
    \alpha ~\longrightarrow~ \varepsilon ~ \mid ~
                             \beta \cdot \alpha ~\mid~
                             (\cact{\K}{\alpha}{\alpha}) \cdot \alpha
  \]
  where $\beta$ is a basic action, $\K$ is an arbitrary \OURDL-formula,
  and $\varepsilon$ denotes the \emph{empty action}.

  A \emph{substitution} is a function $\sigma$ from $\varnames$ to
  $\indivnames$. For a formula, an action or an action sequence $\Gamma$, we use
  $\sigma(\Gamma)$ to denote the result of replacing in $\Gamma$ every
  occurrence of a variable $x$ by the individual $\sigma(x)$. An action
  $\alpha$ is \emph{ground} if it has no variables. An action
  $\alpha'$ is called a \emph{ground instance} of an action $\alpha$
  if $\alpha'=\sigma(\alpha)$ for some substitution $\sigma$. 
\end{definition}

Intuitively, an application of an action $(A \oplus C)$ on an interpretation
$\I$ stands for the addition of the content of $\Int{\I}{C}$ to $\Int{\I}{A}$.
Similarly, $(A \ominus C)$ stands for the removal of $\Int{\I}{C}$ from
$\Int{\I}{A}$.  The two operations can also be performed on extensions of
roles.  Composition stands for successive action execution, and a conditional
action $\cact{\K}{\alpha_1}{\alpha_2}$ expresses that $\alpha_1$ is executed if
the interpretation is a model of $\K$, and $\alpha_2$ is executed otherwise.
If $\alpha_2 = \varepsilon$ then we have an action with a simple
\emph{pre-condition} as in classical planning languages, and we write it
as $\cactne{\K}{\alpha_1}$, omitting $\alpha_2$.

To formally define the semantics of actions, we first introduce the notion of
\emph{interpretation update}.

\begin{definition}[Interpretation update]
  Assume an interpretation $\I$ and let $E$ be a concept or role name. If $E$
  is a concept, let $W\subseteq \dom{\I}$,
  otherwise, if $E$ is a role, let $W\subseteq \dom{\I}\times \dom{\I}$.
  Then, $\I\oplus_{E} W $ (resp., $\I\ominus_{E} W $) denotes the
  interpretation $\I'$ such that $\dom{\I'}=\dom{\I}$, and
  \begin{compactenum}[-]
  \item $E^{\I'}=E^{\I}\cup W$ (resp., $E^{\I'}=E^{\I}\setminus W$), and
  \item $E_1^{\I'}=E_1^{\I}$, for all symbols $E_1\neq E$.
  \end{compactenum}
\end{definition}

Now we can define the semantics of ground actions:

\begin{definition}
  Given a ground action $\alpha$, we define a mapping $S_{\alpha}$
  from interpretations to interpretations as follows:
  \[
    \small
    \renewcommand{\arraystretch}{1.1}
    \begin{array}[t]{rcl}
      S_{\varepsilon}(\I) &=& \I \\[1mm]
      S_{(A \oplus C )\cdot \alpha }(\I) &=& S_{\alpha}(\I\oplus_{A}C^{\I}) \\
      S_{(A \ominus C )\cdot \alpha }(\I) &=& S_{\alpha}(\I\ominus_{A}C^{\I})\\
      S_{(p \oplus r)\cdot \alpha }(\I) &=& S_{\alpha}(\I\oplus_{p}r^{\I})\\
      S_{(p\ominus r)\cdot \alpha }(\I) &=& S_{\alpha}(\I\ominus_{p}r^{\I})\\[1mm]
      S_{(\cact{\K}{\alpha_1}{\alpha_2})\cdot \alpha}(\I) & =&
      \begin{cases}
        S_{\alpha_1 \cdot \alpha}(\I), & \text{if } \I\models \K,\\
        S_{\alpha_2 \cdot \alpha}(\I), & \text{if } \I\not\models \K.
      \end{cases}
    \end{array}
  \]
\end{definition}

In the following, we assume that interpretations are updated using the above
language.

\begin{example}
  \label{example:inter-update}
  The following interpretation $\I_1$ represents (part of) the project database
  of some research institute. There are two active projects, and there are
  three employees that work in the active projects.
  \[
    \small
    \begin{array}{@{}r@{~}c@{~}l@{\qquad\qquad}r@{~}c@{~}l@{}}
      \msf{Prj}^{\I_1} &=& \{p_1,p_2\},
      & \msf{ActivePrj}^{\I_1} &=& \{p_1,p_2\},\\
      \msf{Empl}^{\I_1} &=& \{e_1,e_3,e_7\},
      & \msf{FinishedPrj}^{\I_1} &=& \{ \},\\
      \msf{worksFor}^{\I_1} &=&
      \multicolumn{4}{@{}l}{\{(e_1,p_1), (e_3,p_1), (e_7,p_2)\}.}
    \end{array}
  \]
  We assume constants $\msf{p_i}$ with $\msf{p_i}^{\I} = \mathit{p_i}$ for
  projects, and analogously constants $\msf{e_i}$ for employees.  The following
  action $\alpha_1$ captures the termination of project $p_1$, which is removed
  from the active projects and added to the finished ones.  The employees
  working only for this project are removed.
  \[
    \small
    \alpha_1 =
    \begin{array}[t]{@{}l}
      \msf{ActivePrj} \ominus \{\msf{p_1}\} \cdot
      \msf{FinishedPrj} \oplus \{\msf{p_1}\} \cdot{} \\
      \msf{Empl} \ominus \forall\msf{worksFor}.{\{\msf{p_1}\}}
    \end{array}
  \]
  The interpretation $S_{\alpha_1}(\I_1)$ that reflects the status of the
  database after action $\alpha_1$ looks as follows:
  \[
    \small
    \begin{array}{@{}r@{~}c@{~}l@{\qquad\qquad}r@{~}c@{~}l@{}}
      \msf{Prj}^{\I_1} &=& \{p_1,p_2\},
      & \msf{ActivePrj}^{\I_1} &=& \{p_2\},\\
      \msf{Empl}^{\I_1} &=& \{e_7\},
      & \msf{FinishedPrj}^{\I_1} &=& \{p_1\},\\
      \msf{worksFor}^{\I_1} &=&
      \multicolumn{4}{@{}l}{\{(e_1,p_1), (e_3,p_1), (e_7,p_2)\}.}
    \end{array}
  \]
\end{example}


Note that we have not defined the semantics of actions with variables, i.e.,
for non-ground actions.  In our approach, all variables of an action are seen
as parameters whose values are given before execution by a substitution with
actual individuals, i.e., by grounding.

\begin{example}\label{example:parameters}
  The following action
  $\alpha_2$ with variables $x$, $y$, $z$ transfers the employee $x$ from
  project $y$ to project $z$:
  \[
    \small
    \alpha_2 =
    \begin{array}[t]{l}
      \cactne{(x\,{:}\,\msf{Empl}\land y\,{:}\,\msf{Prj}\land z\,{:}\,\msf{Prj}
      \land (x,y)\,{:}\,\msf{worksFor})}{}\\
      (\msf{worksFor} \ominus \{(x,y)\} \cdot \msf{worksFor} \oplus \{(x,z)\})
    \end{array}
  \]
  Under the substitution $\sigma$ with
  $\sigma(x) = \msf{e_1}$,
  $\sigma(y) = \msf{p_1}$, and
  $\sigma(z) = \msf{p_2}$,
  the action $\alpha_2$ first checks whether $\msf{e_1}$ is an (instance of)
  employee, $\msf{p_1}$, $\msf{p_2}$ are projects, and $\msf{e_1}$ works for
  $\msf{p_1}$.  If yes, it removes the $\msf{worksFor}$ link between
  $\msf{e_1}$ and $\msf{p_1}$, and creates a $\msf{worksFor}$ link between
  $\msf{e_1}$ and $\msf{p_2}$.  If any of the checks fails, it does nothing.
\end{example}

\section{Capturing Action Effects}
\label{sec:capturing}

\newcommand{\STR}{\mathsf{TR}}

In this section we present our core technical tool: a transformation
$\STR_{\alpha}(\K)$ that rewrites $\K$ incorporating the possible effects of an
action $\alpha$. Intuitively, the models of $\STR_{\alpha}(\K)$ are exactly the
interpretations $\I$ such that applying $\alpha$ on $\I$ leads to a model of
$\K$.  In this way, we can effectively reduce reasoning about changes in any
database that satisfies a given $\K$, to reasoning about a single KB.  In the
next section we use this transformation to solve a wide range of data
management problems by reducing them to
standard DL reasoning services, such as finite (un)satisfiability.  This
transformation can be seen as a form of \emph{regression} \cite{LRLLS97}, which
incorporates the effects of a sequence of actions `backwards', from the last
one to the first one.

 \begin{definition}
  Given a KB $\K$, we use $\K_{E\gets E'}$ to denote the KB that is obtained
  from $\K$ by replacing every name $E$ by the (possibly more complex)
  expression $E'$. Given a KB $\K$ and an action $\alpha$, we define
  $\STR_{\alpha}(\K)$ as follows:
  \begin{align*}
   \STR_\varepsilon(\K)& = \K \\
   \STR_{(A \oplus C)\cdot
     \alpha}(\K)& = (\STR_{\alpha}(\K))_{A\gets A\OR C} \\
   \STR_{(A \ominus C)\cdot
     \alpha}(\K)& = (\STR_{\alpha}(\K))_{A\gets A\AND \neg C} \\
   \STR_{(p \oplus r)\cdot
     \alpha}(\K)& = (\STR_{\alpha}(\K))_{p\gets p\cup r} \\
   \STR_{(p \ominus r)\cdot
     \alpha}(\K)& = (\STR_{\alpha}(\K))_{p\gets p\setminus r} \\
   \STR_{(\cact{\K_1}{\alpha_1}{\alpha_2})\cdot
        \alpha}(\K) & =
      (\kbneg\K_1 \,{\lor}\,\STR_{\alpha_1\cdot\alpha}(\K)) \land {}
       (\K_1 \,{\lor}\,\STR_{\alpha_2\cdot\alpha}(\K)).
  \end{align*}
\end{definition}



Note that the size of $\STR_{\alpha}(\K)$ might be exponential in the size of
$\alpha$.  We now show that this transformation correctly captures the effects
of complex actions.

\begin{theorem}
  \label{thm:regression}
  Assume a ground action $\alpha$ and a KB $\K$. For every interpretation $\I$,
  we have $S_{\alpha}(\I)\models \K$ iff $\I\models \STR_{\alpha}(\K)$.
\end{theorem}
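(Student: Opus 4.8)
The plan is to prove the statement by induction on the structure of the ground action $\alpha$, following the recursive definition of $\STR_{\alpha}$. The base case is $\alpha = \varepsilon$: here $S_{\varepsilon}(\I) = \I$ and $\STR_{\varepsilon}(\K) = \K$, so the claim $S_{\varepsilon}(\I) \models \K$ iff $\I \models \STR_{\varepsilon}(\K)$ is immediate. For the inductive step I would treat each of the five remaining syntactic forms of $\alpha$, using as induction hypothesis that the statement holds for the shorter action(s) that appear on the right-hand side of the definition of $\STR$.

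For a basic-action prefix, say $\alpha = (A \oplus C) \cdot \alpha'$, the key observation is a \emph{substitution lemma} connecting interpretation update with name replacement in formulae: for any KB $\K'$ and interpretation $\I$, we have $\I \oplus_{A} C^{\I} \models \K'$ iff $\I \models \K'_{A \gets A \OR C}$. This holds because $\I \oplus_A C^{\I}$ is exactly the interpretation that agrees with $\I$ on every symbol except $A$, whose extension becomes $A^{\I} \cup C^{\I} = (A \OR C)^{\I}$; one then checks by a routine induction on the structure of concepts, roles, inclusions, assertions, and Boolean formula combinations that evaluating $\K'$ in the updated interpretation coincides with evaluating the syntactically rewritten $\K'_{A \gets A \OR C}$ in $\I$. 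Granting this, the computation is direct: $S_{(A \oplus C)\cdot\alpha'}(\I) = S_{\alpha'}(\I \oplus_A C^{\I})$, which by the induction hypothesis models $\K$ iff $\I \oplus_A C^{\I} \models \STR_{\alpha'}(\K)$, which by the substitution lemma holds iff $\I \models (\STR_{\alpha'}(\K))_{A \gets A \OR C} = \STR_{(A\oplus C)\cdot\alpha'}(\K)$. The cases $(A \ominus C)$, $(p \oplus r)$, $(p \ominus r)$ are entirely analogous, using $A^{\I} \setminus C^{\I} = (A \AND \neg C)^{\I}$, $p^{\I} \cup r^{\I} = (p \cup r)^{\I}$, and $p^{\I} \setminus r^{\I} = (p \setminus r)^{\I}$ respectively.

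For the conditional-action case $\alpha = (\cact{\K_1}{\alpha_1}{\alpha_2}) \cdot \alpha'$, I split on whether $\I \models \K_1$. If $\I \models \K_1$, then $S_{\alpha}(\I) = S_{\alpha_1 \cdot \alpha'}(\I)$, so by the induction hypothesis (applied to the strictly simpler action $\alpha_1 \cdot \alpha'$) this models $\K$ iff $\I \models \STR_{\alpha_1 \cdot \alpha'}(\K)$; and since $\I \models \K_1$, the formula $(\kbneg\K_1 \lor \STR_{\alpha_1\cdot\alpha'}(\K)) \land (\K_1 \lor \STR_{\alpha_2\cdot\alpha'}(\K))$ is equivalent at $\I$ to $\STR_{\alpha_1\cdot\alpha'}(\K)$, giving the claim. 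The case $\I \not\models \K_1$ is symmetric, with the conjunction collapsing to $\STR_{\alpha_2\cdot\alpha'}(\K)$ at $\I$.

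The main subtlety — and the step I would be most careful about — is the applicability of the induction hypothesis in the conditional case: $\alpha_1 \cdot \alpha'$ and $\alpha_2 \cdot \alpha'$ are not subterms of $\alpha$ in the syntactic-subexpression order, so the induction must be set up on a well-founded measure (for instance, total number of basic-action and conditional-action occurrences, or total symbol count) for which both $\alpha_1 \cdot \alpha'$ and $\alpha_2 \cdot \alpha'$ are strictly smaller than $(\cact{\K_1}{\alpha_1}{\alpha_2}) \cdot \alpha'$. The other point needing care is the substitution lemma itself: one must verify it uniformly for the full \OURDL\ syntax, including the non-standard constructors (singleton roles $\{(t,t')\}$, role union/difference, role restriction $r|_C$, nominals, and the Boolean formula layer), though for all of these the replacement $E \gets E'$ commutes with semantic evaluation for the obvious reason that $(E')^{\I}$ is by construction the updated extension of $E$. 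Everything else is bookkeeping.
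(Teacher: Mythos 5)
Your proposal is correct and follows essentially the same route as the paper's proof: the same substitution lemma ($\I \oplus_A C^{\I} \models \K'$ iff $\I \models \K'_{A\gets A\OR C}$, verified by structural induction on expressions), the same case split on $\I \models \K_1$ for conditionals, and the same resolution of the well-foundedness issue --- the paper defines exactly the measure you suggest, $s(\varepsilon)=0$, $s(\beta\cdot\alpha)=1+s(\alpha)$, $s(\cact{\K}{\alpha_1}{\alpha_2}\cdot\alpha_3)=1+s(\alpha_1)+s(\alpha_2)+s(\alpha_3)$, and inducts on $s(\alpha)$ so that $\alpha_1\cdot\alpha'$ and $\alpha_2\cdot\alpha'$ are strictly smaller. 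No gaps.
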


\begin{proof}
We define $s(\alpha)$ as follows: $s(\varepsilon) = 0$, $s(\beta \cdot \alpha)
= 1 + s(\alpha)$, and
$s(\cact{\K}{\alpha_1}{\alpha_2}\cdot\alpha_3) = 1 + s(\alpha_1) + s(\alpha_2)
+ s(\alpha_3)$.
We prove the claim by induction on 
$s(\alpha)$. In the base case
where $s(\alpha) = 0$ and $\alpha=\varepsilon$, we have $ S_{\alpha}(\I) = \I $ and
$\STR_{\alpha}(\K)=\K$ by definition, and thus the claim holds.

Assume $\alpha = (A \oplus C)\cdot \alpha'$. Let $\I'=\I\oplus_A\Int{\I}{C}$,
that is, $\I'$ coincides with $\I$ except that
$A^{\I'}=A^{\I}\cup C^{\I}$. 
For every KB $\K'$,
$\I'\models \K'$ iff $\I\models \K'_{A\gets A\OR C}$ (This can be proved by a
straightforward induction on the structure of the expressions in
$\K'$).  In particular,
$\I'\models \STR_{\alpha'}(\K)$ iff $\I\models
(\STR_{\alpha'}(\K))_{A\gets A\OR C}$.
Since $(\STR_{\alpha'}(\K))_{A\gets A\OR C}=\STR_{\alpha}(\K)$, we get
$\I'\models \STR_{\alpha'}(\K)$ iff $\I\models
\STR_{\alpha}(\K)$. By the induction hypothesis, $\I'\models
\STR_{\alpha'}(\K)$ iff $S_{\alpha'}(\I')\models \K$, thus
$\I\models \STR_{\alpha}(\K)$ iff $S_{\alpha'}(\I')\models
\K$. Since $S_{\alpha'}(\I')=S_{\alpha'}(S_{(A \oplus C)}(\I))=S_{\alpha}(\I)$
by definition,
we obtain $\I\models \STR_{\alpha}(\K)$ iff $S_{\alpha}(\I)\models \K$ as desired.

For the cases $\alpha = (A \ominus C)\cdot \alpha'$,
$\alpha = (p \oplus r)\cdot \alpha'$,
and $\alpha = (p \ominus r)\cdot \alpha'$, the argument is analogous.

Finally, we consider $\alpha=(\cact{\K_1}{\alpha_1}{\alpha_2}) \cdot \alpha'$, and assume an
arbitrary $\I$.  We consider the case where $\I\models \K_1$; the case  where
$\I\not\models \K_1$ is analogous. By definition
$S_{\alpha}(\curlyI)=S_{\alpha_1 \cdot \alpha'}(\curlyI)$.
By the induction hypothesis we
  know that $S_{\alpha_1 \cdot \alpha'}(\I) \models \K$ iff $\I \models
  \STR_{\alpha_1 \cdot \alpha'}(\K)$, so $S_{\alpha}(\curlyI)  \models \K$ iff
  $\I \models \STR_{\alpha_1 \cdot \alpha'}(\K)$.
Since $\I \models \K_1$ and
$\STR_{(\cact{\K_1}{\alpha_1}{\alpha_2})\cdot
        \alpha}(\K) =
         (\kbneg\K_1 \,{\lor}\,\STR_{\alpha_1\cdot\alpha}(\K)) \land (\K_1
         \,{\lor}\,\STR_{\alpha_2\cdot\alpha}(\K))$, it follows that
$S_{\alpha}(\curlyI)  \models \K$ iff
  $\I \models \STR_{(\cact{\K_1}{\alpha_1}{\alpha_2})\cdot
        \alpha}(\K)$.
\end{proof}

This theorem will be important for solving the reasoning problems we study
below.

\begin{example}
  The following KB $\K_1$ expresses constraints on the project database of our
  running example: all projects are active or finished, the domain of
  $\msf{worksFor}$ are the employees, and its range the projects.
  \[
    \small
    \begin{array}{l}
      (\msf{Prj} \ISA \msf{ActivePrj} \OR \msf{FinishedPrj}) \land{}\\
      (\exists\msf{worksFor}.\top \ISA \msf{Empl})  \land{}\\
      (\exists\msf{worksFor}^-.\top \ISA \msf{Prj})
    \end{array}
  \]
  By applying the transformation above to $\K_1$ and $\alpha_1$, we obtain the
  following KB $\STR_{\alpha_1}(\K_1)$:
  \[
    \small
    \begin{array}{l}
      (\msf{Prj} \ISA
      (\msf{ActivePrj} \AND \NOT\{\msf{p_1}\}) \OR
      (\msf{FinishedPrj} \OR \{\msf{p_1}\})) \land{}\\
      (\exists\msf{worksFor}.\top \ISA
      \msf{Empl} \AND \exists\msf{worksFor}.{\NOT\{\msf{p_1}\}}) \land{}\\
      (\exists\msf{worksFor}^-.\top \ISA \msf{Prj})
    \end{array}
  \]
\end{example}


\section{Static Verification}

In this section, we consider the scenario where DL KBs are used to impose
integrity constraints on GSD.  One of the most basic reasoning problems for
action analysis in this setting is \emph{static verification}, which consists
in checking whether the execution of an action $\alpha$ always preserves the
satisfaction of integrity constraints given by a KB.


\begin{definition}[The static verification problem]
  Let $\K$ be a KB. We say that an action $\alpha$ is \emph{$\K$-preserving} if
  for every ground instance $\alpha'$ of $\alpha$ and every finite
  interpretation $\curlyI$, we have that $\curlyI\models \K$ implies
  $S_{\alpha'}(\curlyI)\models \K$.  The \emph{static verification problem} is
  defined as follows:
\begin{enumerate}[(SV)]
\item Given  an action $\alpha$ and a KB $\K$, is $\alpha$ $\K$-preserving?
\end{enumerate}
\end{definition}



Using the transformation $\STR_{\alpha}(\K)$ above, we {can} reduce
static verification to finite (un)satisfiability of \OURDL KBs:  An action
$\alpha$ is not $\K$-preserving iff some finite model of $\K$ does not satisfy
$\STR_{\alpha^{*}}(\K)$, where $\alpha^{*}$ is a `canonical' grounding of
$\alpha$.
Formally, we have:

\begin{theorem}\label{thm:verification}
  Assume a (complex) action $\alpha$ and a KB $\K$. Then the following
  are equivalent:
  \begin{asparaenum}[(i)]
  \item The action $\alpha$ is not $\K$-preserving.
  \item $\K\land \kbneg \STR_{\alpha^{*}}(\K)$ is finitely satisfiable, where
    $\alpha^{*}$ is obtained from $\alpha$ by replacing each variable with a
    fresh individual name not occurring in $\alpha$ and~$\K$.
  \end{asparaenum}
\end{theorem}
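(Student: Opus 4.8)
\medskip
\noindent\textbf{Proof plan.}
The statement follows from Theorem~\ref{thm:regression} once one controls how the rewriting $\STR$ interacts with grounding, so the first step is an auxiliary lemma: for every KB $\K$, every action $\alpha$, and every substitution $\sigma$, one has $\STR_{\sigma(\alpha)}(\K)=\sigma(\STR_{\alpha}(\K))$. I would prove this by a routine induction on the structure of $\alpha$ along the clauses defining $\STR$; the only thing to observe is that a substitution acts only on variables, whereas the name replacements performed by $\STR$ (replacing a concept name $A$ by $A\OR C$ or $A\AND\neg C$, and a role name $p$ by $p\cup r$ or $p\setminus r$) act only on concept and role names, so the two operations commute, and likewise for the conditions $\K_1$ of conditional actions. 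Note that $\alpha^{*}=\sigma^{*}(\alpha)$ for the substitution $\sigma^{*}$ sending each variable $x$ of $\alpha$ to its fresh individual name; hence $\alpha^{*}$ is a ground instance of $\alpha$ and, by the lemma, $\STR_{\alpha^{*}}(\K)=\sigma^{*}(\STR_{\alpha}(\K))$.

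The direction (ii)$\Rightarrow$(i) is then immediate: if $\K\land\kbneg\STR_{\alpha^{*}}(\K)$ has a finite model $\I$, then $\I\models\K$ and $\I\not\models\STR_{\alpha^{*}}(\K)$, so Theorem~\ref{thm:regression} applied to the ground action $\alpha^{*}$ gives $S_{\alpha^{*}}(\I)\not\models\K$; since $\alpha^{*}$ is a ground instance of $\alpha$ and $\I$ is a finite model of $\K$, this shows that $\alpha$ is not $\K$-preserving.

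For (i)$\Rightarrow$(ii), assume that $\alpha$ is not $\K$-preserving, witnessed by a ground instance $\alpha'=\sigma(\alpha)$ and a finite interpretation $\I$ with $\I\models\K$ and $S_{\alpha'}(\I)\not\models\K$. By Theorem~\ref{thm:regression} we get $\I\models\K\land\kbneg\STR_{\alpha'}(\K)$, and by the lemma $\STR_{\alpha'}(\K)=\sigma(\STR_{\alpha}(\K))$. I would then define $\I'$ to differ from $\I$ only by setting $a_{x}^{\I'}:=\sigma(x)^{\I}$ for every variable $x$ of $\alpha$, where $a_x$ is the fresh name used for $x$ in $\alpha^{*}$ (the right-hand side uses the original $\I$, so $\I'$ is well defined even when $\sigma$ identifies two variables or maps a variable onto one of the fresh names). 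Since no $a_x$ occurs in $\K$, we still have $\I'\models\K$; and since $\I'$ agrees with $\I$ on every name occurring in $\STR_{\alpha}(\K)$ while $a_x^{\I'}=\sigma(x)^{\I}$, the interpretation $\I'$ satisfies $\sigma^{*}(\STR_{\alpha}(\K))$ exactly when $\I$ satisfies $\sigma(\STR_{\alpha}(\K))$. Hence $\I'\not\models\sigma^{*}(\STR_{\alpha}(\K))=\STR_{\alpha^{*}}(\K)$, and as $\dom{\I'}=\dom{\I}$ is finite, $\I'$ finitely satisfies $\K\land\kbneg\STR_{\alpha^{*}}(\K)$, which is (ii).

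I expect the one genuinely delicate point to be this last ``re-routing'' from an arbitrary grounding $\sigma$ to the canonical fresh grounding $\sigma^{*}$: one has to check that it preserves both satisfiability and finiteness of the witness. It goes through precisely because \OURDL has no unique-name assumption and the names $a_x$ are new with respect to $\K$ and $\alpha$, so their interpretation is entirely unconstrained and can be chosen to mimic the effect of $\sigma$; everything else is bookkeeping about which names each formula and action mentions.
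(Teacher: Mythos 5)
Your proposal is correct and follows essentially the same route as the paper: both directions reduce to Theorem~\ref{thm:regression}, and the (i)$\Rightarrow$(ii) direction re-routes an arbitrary witnessing grounding to the canonical fresh one by reinterpreting each fresh name $a_x$ as $\sigma(x)^{\I}$, which is exactly the paper's construction of $\I^{*}$. Your explicit commutation lemma $\STR_{\sigma(\alpha)}(\K)=\sigma(\STR_{\alpha}(\K))$ is left implicit in the paper's proof, but it is the same underlying argument.
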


\begin{example}
  \label{example:verification}
  The action $\alpha_1$ from Example~\ref{example:inter-update} is not
  $\K_1$-preserving:
  $\I_1 \models \K_1$, but $S_{\alpha_1}(\I_1) \not\models\K_1$ since the
  concept inclusion {\small $\exists\msf{worksFor}.\msf{Prj} \ISA \msf{Empl}$}
  is violated.  This is reflected in the fact that
  $\I_1\not\models\STR_{\alpha_1}(\K_1)$, as can be readily checked.
  Intuitively, values removed from $\msf{Empl}$ should also be removed from
  $\msf{worksFor}$, as in the following $\K_1$-preserving action:
  \[
    \small
    \alpha'_1 =
    \begin{array}[t]{@{}l}
      \msf{ActivePrj} \ominus \{\msf{p_1}\} \cdot
      \msf{FinishedPrj} \oplus  \{\msf{p_1}\} \cdot {} \\
      \msf{Empl} \ominus \forall\msf{worksFor}.{\{\msf{p_1}\}} \cdot
      \msf{worksFor} \ominus \msf{worksFor}|_{\{\msf{p_1}\}}
    \end{array}
  \]
\end{example}

The above theorem provides an algorithm for static verification, which we can
also use to obtain tight bounds on the computational complexity of the problem.
Indeed, even though $\K\land \kbneg\STR_{\alpha^{*}}(\K)$ may be of size
exponential in $\alpha$, we can avoid to generate it all at once.  More
precisely, we use a non-deterministic polynomial time many-one reduction that
builds only $\K\land \kbneg\STR^c_{\alpha^{*}}(\K)$ for a fragment
$\kbneg\STR^c_{\alpha}(\K)$ of $\kbneg \STR_{\alpha^{*}}(\K)$ that corresponds
to one fixed way of choosing one of $\alpha_1$ or $\alpha_2$ for each
conditional action $\cact{\K'}{\alpha_1}{\alpha_2}$ in $\alpha$ (intuitively,
we can view $\kbneg \STR^c_{\alpha^{*}}(\K)$ as one conjunct of the DNF of
$\kbneg \STR_{\alpha}(\K)$, where axioms and assertions are treated as
propositions). Such a $\kbneg \STR^c_{\alpha}(\K)$ has polynomial size, and it
can be built non-deterministically in polynomial time.  It is not hard to show
that $\K\land \kbneg \STR_{\alpha^{*}}(\K)$ is finitely satisfiable iff there
is some choice $\STR^c_{\alpha^{*}}(\K)$ such that $\K\land \kbneg
\STR^c_{\alpha^{*}}(\K)$ is finitely satisfiable.
  By Theorem~\ref{thm:dl-sat}, the latter test
  can be done in non-deterministic exponential time, hence from
  Theorem~\ref{thm:verification} we obtain:


\begin{theorem}\label{thm:verification-nexp}
  The problem (SV) is co\nexptime-complete in case the input KB is expressed in
  \OURDL.
\end{theorem}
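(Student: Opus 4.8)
\emph{Upper bound (membership in co\nexptime).}
I would show that the complement problem --- deciding whether $\alpha$ is \emph{not} $\K$-preserving --- is in \nexptime. By Theorem~\ref{thm:verification}, this is equivalent to deciding finite satisfiability of $\K\land\kbneg\STR_{\alpha^{*}}(\K)$, so it would suffice to build this KB and invoke Theorem~\ref{thm:dl-sat}. The obstacle, as already noted, is that $\STR_{\alpha^{*}}(\K)$ can have size exponential in $\alpha$. Instead, reading the top-level Boolean structure of $\kbneg\STR_{\alpha^{*}}(\K)$ propositionally (treating axioms and assertions as atoms), it is equivalent to a disjunction over the ways $c$ of resolving each conditional subaction $\cact{\K'}{\alpha_1}{\alpha_2}$ of $\alpha$ to one of its branches; each disjunct $\kbneg\STR^c_{\alpha^{*}}(\K)$ is a conjunction involving polynomially many (possibly substituted) subformulas $\K'$ coming from the conditionals of $\alpha$ together with the single regression $\kbneg\STR_\gamma(\K)$ of a basic-action sequence $\gamma$ with $|\gamma|\le|\alpha^{*}|$; and $\K\land\kbneg\STR_{\alpha^{*}}(\K)$ is finitely satisfiable iff $\K\land\kbneg\STR^c_{\alpha^{*}}(\K)$ is, for some $c$.

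The crucial observation is that, although $\STR_\gamma(\K)$ can have exponential \emph{tree} size (for instance when $\gamma$ iterates $A\oplus\exists p.A$), it admits a \emph{directed acyclic graph} representation of polynomial size: processing the basic actions of $\gamma$ in order while maintaining, for every concept/role name $E$, a single shared node describing its current replacement, each basic action $A\oplus C$ or $A\ominus C$ (and likewise for roles) adds only $O(|C|)$ fresh nodes on top of the current DAG. Such a DAG --- and hence all of $\kbneg\STR^c_{\alpha^{*}}(\K)$, whose remaining ingredients are polynomially bounded --- is built in polynomial time once $c$ is guessed, and it is flattened into an \emph{ordinary} polynomial-size \OURDL KB by introducing a fresh concept (resp.\ role) name for each concept-valued (resp.\ role-valued) node and adding, per node, the two inclusions equating the fresh name with the corresponding constructor applied to the names of its children; this is possible precisely because \OURDL provides all the needed concept and role constructors and allows Boolean combinations of axioms, and it preserves finite satisfiability. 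The resulting \nexptime procedure for ``$\alpha$ is not $\K$-preserving'' nondeterministically guesses $c$, builds the flattened polynomial-size KB in polynomial time, and tests it for finite satisfiability, which is in \nexptime by Theorem~\ref{thm:dl-sat}; the guess of $c$ is absorbed into the overall nondeterminism. Hence (SV) is in co\nexptime.

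\emph{Lower bound (co\nexptime-hardness).}
I would reduce from finite \emph{un}satisfiability of \OURDL KBs, which is co\nexptime-hard by Theorem~\ref{thm:dl-sat}. Given a KB $\K_0$, pick a concept name $A$ and an individual name $a$ not occurring in $\K_0$, and set $\K := \kbneg(a\,{:}\,A)$ and $\alpha := \cactne{\K_0}{(A\oplus\{a\})}$ (which, being ground, is its only ground instance). For a finite interpretation $\I$: if $\I\models\K_0$ then $S_\alpha(\I)=\I\oplus_A\{a^{\I}\}$, so $a^{\I}\in A^{S_\alpha(\I)}$ and thus $S_\alpha(\I)\not\models\K$; if $\I\not\models\K_0$ then $S_\alpha(\I)=\I$. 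Consequently $\alpha$ is not $\K$-preserving iff there is a finite $\I$ with $\I\models\K_0$ and $a^{\I}\notin A^{\I}$, and since $A$ does not occur in $\K_0$ this holds iff $\K_0$ is finitely satisfiable (from any finite model of $\K_0$, emptying the extension of $A$ yields a finite model that additionally satisfies $\K$). Therefore $\alpha$ is $\K$-preserving iff $\K_0$ is finitely unsatisfiable, and the reduction is clearly polynomial, which yields co\nexptime-hardness.

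\emph{Main obstacle.}
The delicate part is the membership argument --- specifically, making rigorous the claim that $\STR^c_{\alpha^{*}}(\K)$ has a polynomial representation: one must verify that the DAG bookkeeping for the composed substitutions stays polynomial even when names recursively reappear inside their own replacements or mutually depend on one another, and that the subsequent flattening into an ordinary \OURDL KB is equisatisfiability-preserving. The hardness direction is routine by comparison.
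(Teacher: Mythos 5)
Your proof is correct and its overall architecture matches the paper's: the upper bound nondeterministically resolves each conditional in $\alpha$ to one of its branches, producing a single fragment $\kbneg\STR^c_{\alpha^*}(\K)$ whose conjunction with $\K$ is tested for finite satisfiability via Theorem~\ref{thm:dl-sat}, and the lower bound reduces from finite unsatisfiability of \OURDL KBs. Two differences are worth noting. First, the paper simply asserts that each such fragment ``has polynomial size,'' whereas you observe that even for a ground sequence of basic actions the composed substitutions $E\gets E'$ can blow up the \emph{tree} size of the regression exponentially (e.g., iterating $A\oplus\exists p.A$), and you repair this with a shared DAG representation that is then flattened into an ordinary polynomial-size \OURDL KB by naming each node with a fresh concept or role name and adding the two defining inclusions. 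This step is sound (the fresh-name equivalences force the intended extensions in every model, so finite satisfiability is preserved), and it makes explicit a point the paper's proof leaves implicit; strictly speaking, some such device is needed to justify the claimed polynomial bound. Second, your hardness gadget ($\K=\kbneg(a\,{:}\,A)$ with the conditional action $\cactne{\K_0}{A\oplus\{a\}}$) differs from the paper's, which takes $\K_0\land(A\ISA\NOT A')\land(o\,{:}\,A)$ and the single unconditional basic action $A'\oplus\{o\}$; both are correct, but the paper's version establishes hardness already for a single basic action without conditionals, which is slightly stronger.
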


We note that in our definition of the (SV) problem, in addition to the action
to be verified, one has as input only one KB $\K$ expressing constraints.  We
can also consider other interesting variations of the problem where, for
example, we have a pair of KBs $\K_\mathit{pre}$ and $\K_\mathit{post}$ instead
of (or in addition to) $\K$ and we want to decide whether executing the action
on any model of $\K_{\mathit{pre}}$ (and $\K$) leads to a model of
$\K_\mathit{post}$ (and $\K$).  The reasoning techniques and upper bounds
presented above also apply to these generalized settings.


\subsection{Lowering the Complexity}

The goal of this section is to identify a setting for which the computational
complexity of static verification is lower.  The natural way to achieve this is
to consider as constraint language a
DL with better computational properties, such as the logics of the \dlc
family \cite{CDLLR07}.

Unfortunately, we cannot achieve tractability, since static verification is
co\np hard even in a very restricted setting, as shown next.

\begin{theorem}
  \label{thm:conp-hardness}
  The static verification problem is co\np-hard already for KBs of the form
  $(A_1\ISA \neg A_1')\land \cdots \land (A_n\ISA \neg A_n') $, where each
  $A_i,A_i' $ is a concept name, and
  ground sequences of basic actions of the forms $(A \oplus C)$ and $(A \ominus
  C)$.
\end{theorem}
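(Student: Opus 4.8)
\emph{Proof idea.} The plan is to give a polynomial‑time many‑one reduction from unsatisfiability of propositional $3$CNF formulae (a co\np‑complete problem) to (SV): from a formula $\varphi$ I build a KB $\K$ of the allowed shape and a \emph{ground} sequence $\alpha$ of basic actions of the forms $(A\oplus C)$ and $(A\ominus C)$ such that $\alpha$ is $\K$‑preserving iff $\varphi$ is unsatisfiable. The trick is to make the action, on \emph{any} finite interpretation $\I$, collect into one concept name $A$ exactly those domain elements whose ``local'' truth assignment — read off from their membership in auxiliary concept names $X_1,\dots,X_k$ — satisfies $\varphi$; taking as $\K$ the trivial inclusion $A\ISA\neg A$ (which forces $A$ to be empty), preservation then fails precisely when some such element can exist, i.e.\ precisely when $\varphi$ is satisfiable. (One could route this through Theorem~\ref{thm:verification}, i.e.\ show that $\K\land\kbneg\STR_{\alpha}(\K)$ is finitely satisfiable iff $\varphi$ is; I will instead argue directly on the semantics of $S_\alpha$, which is cleaner here.)

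\emph{Construction.} Let $\varphi=c_1\land\dots\land c_m$ be over variables $x_1,\dots,x_k$, with $c_\ell=L_{\ell,1}\lor L_{\ell,2}\lor L_{\ell,3}$. I use concept names $X_1,\dots,X_k$, with the intended reading that an element $d$ of an interpretation $\I$ encodes the assignment $v_d$ defined by $v_d(x_j)=\mathsf{true}$ iff $d\in X_j^{\I}$, plus two fresh concept names $A$ and $T$. Put $\K:=(A\ISA\neg A)$, which is of the required form (with $A_1=A_1'=A$). For a literal $L$ set $\widehat L:=X_j$ if $L=x_j$ and $\widehat L:=\neg X_j$ if $L=\neg x_j$, so $\widehat L$ is a concept name or the negation of one. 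Define
\[
\alpha \;:=\; (A\oplus\top)\cdot B_1\cdot B_2\cdots B_m ,
\]
where the block for clause $c_\ell$ is
\[
B_\ell \;:=\; (T\oplus\top)\cdot(T\ominus\widehat{L_{\ell,1}})\cdot(T\ominus\widehat{L_{\ell,2}})\cdot(T\ominus\widehat{L_{\ell,3}})\cdot(A\ominus T).
\]
Intuitively $B_\ell$ first resets $T$ to the whole domain, then deletes from it the elements satisfying each of the three literals, so that after the three deletions $T$ holds exactly the elements at which $c_\ell$ is \emph{false}; the final $(A\ominus T)$ removes those from $A$. Since $X_1,\dots,X_k$ are never modified by $\alpha$, a straightforward induction over the blocks shows that, starting from any finite $\I$, $A^{S_\alpha(\I)}=\{d\in\dom{\I}\mid v_d\models\varphi\}$.

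\emph{Correctness and the main obstacle.} As $\alpha$ is ground, $\alpha$ is not $\K$‑preserving iff there is a finite $\I$ with $A^{\I}=\emptyset$ (so $\I\models\K$) and $A^{S_\alpha(\I)}\neq\emptyset$ (so $S_\alpha(\I)\not\models\K$); by the invariant above and the fact that $A^{S_\alpha(\I)}$ does not depend on $A^{\I}$, this holds iff some finite $\I$ has an element whose assignment satisfies $\varphi$, i.e.\ iff $\varphi$ is satisfiable (a one‑element interpretation with $X_j^{\I}=\{d\}$ iff $v(x_j)=\mathsf{true}$, and $A^{\I}=\emptyset$, witnesses one direction; the converse is immediate). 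Hence $\alpha$ is $\K$‑preserving iff $\varphi$ is unsatisfiable, and $\varphi\mapsto(\K,\alpha)$ is clearly computable in polynomial time, giving co\np‑hardness; note that $\alpha$ uses only $\top$, concept names and negated concept names, and $\K$ only a single inclusion of the stated shape (and if $\top$ is not available in the ambient DL one can add a conjunct $(B\ISA\neg B)$ to $\K$, forcing $B$ empty, and write $\neg B$ for $\top$). There is no deep obstacle here; the only point requiring care is the bookkeeping that a \emph{generic} (many‑element, otherwise arbitrary) interpretation is processed correctly — in particular that negative literals are handled via $\neg X_j$, that each $B_\ell$ properly reinitialises $T$ with $T\oplus\top$, and that both the ``precondition'' ($A$ empty) and the constraint violation are encoded into the single KB $\K$, which works precisely because $A$ may be taken empty in the initial state while the first action $A\oplus\top$ overwrites it.
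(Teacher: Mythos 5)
Your reduction is correct, but it goes through propositional (un)satisfiability rather than the paper's 3-colorability, so the two proofs are genuinely different instantiations of the same underlying trick. In both cases a ground sequence of $(A\oplus C)/(A\ominus C)$ actions is used to ``sweep'' over an arbitrary finite interpretation and compute, inside a concept name that the KB forces to be empty, exactly the set of elements that encode a witness; non-preservation then coincides with the existence of a witness. The difference lies in how the work is split between the KB and the action: the paper keeps the witness-checking partly in the KB (the inclusions $A_v^{c}\ISA\neg A_{v'}^{c}$ enforce properness of the coloring, and the action only checks totality, using a nominal $\{o\}$ to localise the test), whereas you push everything into the action --- per-clause blocks that reset $T$ with $\top$, carve out the elements falsifying the clause via concept names and their negations, and subtract them from $A$ --- so that the KB degenerates to the single inclusion $A\ISA\neg A$. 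Your version buys a more minimal KB and avoids nominals, at the cost of using $\top$ and negated concept names inside the action arguments (both admissible, since $C$ in a basic action is an arbitrary concept, and your fallback of simulating $\top$ by $\neg B$ with an extra conjunct $B\ISA\neg B$ stays within the stated KB fragment); the paper's version shows hardness with actions whose right-hand concepts are just nominals and concept names. The key correctness points --- that $X_1,\dots,X_k$ are never modified, that $T$ is reinitialised in each block, and that $A^{S_\alpha(\I)}$ is independent of $A^{\I}$ because the first step is $A\oplus\top$ --- are all present and sound, so no gap remains.
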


We next present a rich variant of \dlh,
which we call \dlhp, for which the static verification problem is in co\np.
It supports (restricted) Boolean combinations of inclusions and assertions, and
allows for complex concepts and roles in assertions.  As shown below,
this allows us to express the effects of actions inside \dlhp KBs.

\begin{definition}
  The logic \emph{\dlhp} is defined as follows:
  \begin{enumerate}[-]
   \item Concept inclusions have the form $C_1\ISA C_2$ or
    $C_1\ISA \neg C_2$, with $C_1, C_2\in \conceptnames\cup
    \{\SOME{p}{\top}, \SOME{p^-}{\top} \mid p\in \rolenames \}.$
  \item Role inclusions in $\K$ have the form $r_1\ISA r_2$ or $r_1\ISA
    \NOT r_2$, with $r_1,r_2\in \rolenames\cup\{p^-\mid p\in\rolenames \}$.
  \item Role assertions are defined as for \OURDL, but in concept assertions
    $o: C$, we require $C\in \boldB^{+}$, where $\boldB^{+}$ is the smallest
    set of concepts such that:
    \begin{compactenum}[(a)]
    \item $\conceptnames\subseteq \boldB^{+}$,
    \item $\{o'\}\in \boldB^{+}$ for all $o'\in \indivnames$,
    \item $\exists r.\top\in \boldB^+$ for all roles $r$,
    \item $\{B_1 \AND B_2,B_1 \OR B_2, \neg B_1\}\subseteq \boldB^+$ for all
      $B_1, B_2\in\boldB^+$.
    \end{compactenum}
  \item Formulae and KBs are defined as for \OURDL, but the operator $\kbneg$ may occur only in front of assertions.
  \end{enumerate}
  A \dlh KB $\K$ is a \dlhp KB that satisfies the following restrictions:
  \begin{compactenum}[-]
  \item \K is a \emph{conjunction} of inclusions and assertions, and
  \item all assertions in $\K$ are \emph{basic assertions} of the forms $o:A$
    with $A\in \conceptnames$, and $(o,o'):p$ with $p\in\rolenames$.
  \end{compactenum}
  We make the \emph{unique name assumption (UNA)}: for every pair of
  individuals $o_1$, $o_2$ and interpretation $\I$, we have $o_1^{\I}\neq
  o_2^\I$.
\end{definition}

We need to slightly restrict the action language, which involves allowing only
Boolean combinations of
assertions to express the condition $\K$ in actions of the form
$\cact{\K}{\alpha_1}{\alpha_2}$.
\begin{definition}
  A (complex) action $\alpha$ is called \emph{simple} if
  \begin{inparaenum}[(i)]
  \item no (concept or role) inclusions occur in $\alpha$, and
  \item all concepts of $\alpha$ are from $\boldB^+$.
  \end{inparaenum}
\end{definition}

We next characterize the complexity of finite satisfiability in
  \dlhp.

\begin{theorem}
  \label{thm:dllite-complexity}
  Finite satisfiability of \dlhp KBs is \np-complete.
\end{theorem}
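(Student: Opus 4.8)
The plan is to establish the two matching bounds separately, the lower bound being routine and the upper bound carrying the real content.

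\textbf{Lower bound.} \np-hardness follows by a direct reduction from propositional satisfiability. Given a CNF formula $\varphi$ over variables $x_1,\dots,x_n$, pick concept names $A_1,\dots,A_n$ and a single individual $o$, and set $\K_\varphi \;=\; \bigwedge_{c}\bigvee_{\ell\in c}(o:L_\ell)$, where $L_\ell$ is $A_i$ if $\ell=x_i$ and $\neg A_i$ if $\ell=\neg x_i$. This is a legal \dlhp KB: each $L_\ell\in\boldB^{+}$, negation occurs only inside an assertion (no $\kbneg$ at all), and the formula uses only $\land$ and $\lor$. A finite model of $\K_\varphi$ yields a satisfying assignment for $\varphi$ and conversely a satisfying assignment gives a one-element finite model, so finite satisfiability of \dlhp KBs is \np-hard.

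\textbf{Upper bound.} The idea is to ``factor out'' all nondeterminism by polynomially many guesses and then check a purely conjunctive core in polynomial time. Since in \dlhp the operator $\kbneg$ (and concept negation) may occur only in front of assertions, every \dlhp KB is, up to trivial rewriting, a \emph{positive} Boolean combination (using $\land,\lor$ only) of inclusions and assertion-literals (assertions and negated assertions). First I would guess, for each $\lor$, which disjunct is made true; this selects a conjunction $\K'$ of inclusions and assertion-literals of size polynomial in $|\K|$, and $\K$ is finitely satisfiable iff some such $\K'$ is. Next I would \emph{unfold} the complex $\boldB^{+}$-concepts and the composite roles occurring in the remaining assertions: an assertion $o:B$ with $B\in\boldB^{+}$ is equisatisfiable with a (guessed) set of literals of the forms $o:A$, $o:\neg A$ ($A\in\conceptnames$), $o:\{o'\}$ (which under the UNA is just a name-equality test), and $o:(\neg)\SOME{r}{\top}$; a role assertion $(o,o'):r$ with composite $r$ reduces likewise to role-name literals $(o,o'):(\neg)p$, to concept assertions on $o'$ (for the $r|_C$ constructor, recursively), and to equalities (for singleton roles). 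As there are only polynomially many sub-concepts and sub-roles, all this unfolding, including the extra disjunctive choices it introduces, is carried out by further polynomial nondeterministic guesses. We are then left with a \emph{flat} conjunctive KB: \dlh-style concept and role inclusions of the permitted shapes, plus literals $o:(\neg)A$ and $(o,o'):(\neg)p$ and name (in)equalities, the latter discharged immediately by the UNA.

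It remains to decide finite satisfiability of such a flat conjunctive KB in polynomial time, which I would do by the standard DL-Lite canonical-model construction adapted to role hierarchies: take the named individuals with their guessed concept/role memberships, close under the positive concept and role inclusions (propagating concept-name and $\SOME{p}{\top}$ memberships, and role-name memberships along positive role inclusions), and witness each still-unsatisfied $\SOME{p}{\top}$ by a fresh anonymous successor, iterating to saturation. A clash — a positive and negative literal for the same concept or role on the same element(s), a violated negative inclusion, or a contradicted equality — is detectable in polynomial time. Because \dlhp has no number restrictions or functionality, nothing can force an infinite model: the finitely generated, ``regular'' saturation can be folded into a finite interpretation (e.g.\ by collapsing the anonymous $p$-witnesses into a bounded number of ``blob'' elements), so finite and unrestricted satisfiability coincide here. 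Hence the flat core is decidable in polynomial time, and the overall procedure — guess the Boolean disjuncts, guess the unfolding of complex assertions, run the polynomial check — runs in \np.

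The step I expect to be the main obstacle is the last one: verifying carefully that the complex $\boldB^{+}$-concepts and the composite roles admitted in \dlhp assertions genuinely add no expressive power beyond local ``type'' constraints on named individuals (so that they can be discharged by polynomial guessing), and that the canonical-model construction with role hierarchies together with negative concept and role inclusions still enjoys the finite model property and stays polynomial. The Boolean guessing and the hardness reduction are routine by comparison.
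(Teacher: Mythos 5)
Your proposal is correct and follows essentially the same route as the paper's proof: guess the satisfied disjuncts to reduce to a conjunctive KB, then nondeterministically unfold the complex $\boldB^{+}$-concepts and composite roles in assertions down to basic literals (the paper's ``completion''), and finally invoke polynomial-time finite satisfiability for the resulting flat \dlh KB. The only differences are cosmetic: you spell out the SAT reduction and the canonical-model argument that the paper leaves implicit.
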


\noindent
\dlhp is expressive enough to allow us to reduce static verification for simple
actions to finite unsatisfiability, 
{ and similarly as above, we
 can use a non-deterministic polynomial time many-one reduction (from the
 complement of static verification to finite unsatisfiability)
 to obtain a co\np upper bound  on the complexity of static
  verification.
This bound is tight, even if we allow only actions with preconditions
rather than full conditional actions. We note that all lower bounds in
the next section also hold for this restricted case.}

\begin{theorem}
\label{thm:reduction-dllite}
The static verification problem for \dlhp KBs and simple actions is
  co\np-complete.
\end{theorem}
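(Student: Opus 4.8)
The co\np lower bound requires no new work: the reduction in Theorem~\ref{thm:conp-hardness} already produces, as hard instances, KBs of the form $(A_1\ISA\neg A_1')\land\cdots\land(A_n\ISA\neg A_n')$ — which are \dlh, hence \dlhp, KBs — together with ground sequences of basic actions $(A\oplus C)$ and $(A\ominus C)$ in which every $C$ is a Boolean combination of concept names and thus lies in $\boldB^+$; such actions are simple. So I would concentrate on the co\np upper bound, following the same route as for Theorem~\ref{thm:verification-nexp} but replacing the \nexptime\ satisfiability test by the \np\ one of Theorem~\ref{thm:dllite-complexity}.

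By Theorem~\ref{thm:verification}, $\alpha$ fails to be $\K$-preserving iff $\K\land\kbneg\STR_{\alpha^{*}}(\K)$ is finitely satisfiable, where $\alpha^{*}$ replaces the variables of $\alpha$ by fresh individual names (which, under the UNA of \dlhp, are automatically interpreted distinctly). As in the discussion preceding Theorem~\ref{thm:verification-nexp}, this holds iff for some choice $c$ that selects one of the two branches of each conditional $\cact{\K'}{\alpha_1}{\alpha_2}$ occurring in $\alpha$, the ``one-branch'' fragment $\K\land\kbneg\STR^{c}_{\alpha^{*}}(\K)$ is finitely satisfiable; using a compact (structure-shared) representation exactly as there, this fragment has polynomial size and is computable from $\alpha$, $\K$ and $c$ in polynomial time.

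The genuinely new ingredient is to check that $\K\land\kbneg\STR^{c}_{\alpha^{*}}(\K)$ is equivalent to, and poly-time transformable into, a \dlhp\ formula. The key observation is that for a \emph{simple} action $\STR$ only ever substitutes a concept name $A$ by $A\OR C$ or $A\AND\neg C$ with $C\in\boldB^+$, and a role name $p$ by $p\cup r$ or $p\setminus r$, and (as simple actions contain no inclusions) the conditions tested by conditionals are Boolean combinations of assertions only. Since $\boldB^+$ is closed under $\AND,\OR,\neg$ and contains $\SOME{r}{\top}$ for \emph{every} role $r$, a routine induction on $\alpha$ shows that in $\STR^{c}_{\alpha^{*}}(\K)$ every concept occurring on a side of a concept inclusion, or inside a concept assertion, is again in $\boldB^+$ (for instance $\SOME{p}{\top}$ becomes $\SOME{(p\cup r)}{\top}\in\boldB^+$), while role inclusions and role assertions may acquire arbitrary complex roles, which \dlhp\ already tolerates in assertions; moreover $\STR^{c}_{\alpha^{*}}(\K)$ is a Boolean combination of such inclusions and assertion literals in which $\kbneg$ occurs only in front of assertions. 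Pushing the outermost $\kbneg$ of $\kbneg\STR^{c}_{\alpha^{*}}(\K)$ inwards and non-deterministically picking one disjunct of the resulting DNF leaves a conjunction of \emph{negated inclusions} and assertion literals. A negated concept inclusion $\kbneg(C_1\ISA C_2)$ (resp.\ $\kbneg(C_1\ISA\neg C_2)$) is then replaced by $o\,{:}\,(C_1\AND\neg C_2)$ (resp.\ $o\,{:}\,(C_1\AND C_2)$) for a fresh $o$, which is legal since $C_1,C_2\in\boldB^+$ and $\boldB^+$ is Boolean-closed; a negated role inclusion $\kbneg(r_1\ISA r_2)$ (resp.\ $\kbneg(r_1\ISA\neg r_2)$) is replaced by $(o,o')\,{:}\,r_1\land\kbneg((o,o')\,{:}\,r_2)$ (resp.\ $(o,o')\,{:}\,r_1\land(o,o')\,{:}\,r_2$) for fresh $o,o'$, using that \dlhp\ allows arbitrary roles in role assertions and $\kbneg$ before assertions. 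Conjoining the outcome with the \dlhp\ KB $\K$ produces a polynomial-size \dlhp\ formula that is finitely satisfiable iff $\K\land\kbneg\STR^{c}_{\alpha^{*}}(\K)$ is, so Theorem~\ref{thm:dllite-complexity} puts its finite satisfiability in \np. Folding the guess of $c$, the guess of the disjunct, and the \np\ witness into a single \np\ computation shows the complement of (SV) is in \np, whence (SV) is in co\np; with the lower bound, it is co\np-complete.

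The step I expect to be the main obstacle is precisely this ``\dlhp-closure'' claim, in two respects. First, one has to verify that \emph{no} application of $\STR$ with a simple action can push a concept out of $\boldB^+$ at a position where \dlhp\ is restrictive; the delicate case is role updates $p\oplus r$ / $p\ominus r$ with complex $r$, which inject complex roles into $\SOME{(p\cup r)}{\top}$ and into role (in)clusions — harmless only because \dlhp\ is permissive about roles inside $\SOME{\cdot}{\top}$ and inside assertions — and one must also argue that eliminating complex-sided inclusions via fresh individuals is sound for \emph{finite} interpretations. Second, exactly as for Theorem~\ref{thm:verification-nexp}, one must ensure that the one-branch fragment (whose naive syntactic expansion can blow up under iterated substitution) admits a polynomial-size representation that is still accepted by the \np\ satisfiability procedure of Theorem~\ref{thm:dllite-complexity}.
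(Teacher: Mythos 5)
Your proposal is correct and follows essentially the same route as the paper: the lower bound is inherited from Theorem~\ref{thm:conp-hardness}, and the upper bound non-deterministically builds a polynomial-size one-branch fragment of the regression, then turns it into an equisatisfiable \dlhp\ KB by exploiting that simple actions keep all concepts in $\boldB^+$, that $\kbneg$ in $\K$ occurs only before assertions (so every inclusion ends up negated), and by replacing negated concept/role inclusions with assertions on fresh individuals before invoking the \np\ satisfiability test of Theorem~\ref{thm:dllite-complexity}. The only (immaterial) differences are your extra DNF-disjunct guess and your use of $(o,o')\,{:}\,r_1\land\kbneg((o,o')\,{:}\,r_2)$ where the paper uses the role-difference assertion $(o,o')\,{:}\,r_1\setminus r_2$.
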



\section{Planning}
\label{sec:planning}

We have focused so far on ensuring that the satisfaction of constraints is
preserved when we evolve GSD.  But additionally,
there may be desirable states of the GSD that we want to achieve, or
undesirable ones that we want to avoid.  For instance, one may want to ensure
that a finished project is never made active again.
This raises several problems, such as deciding if there exists a sequence of
actions to reach a state with certain properties, or whether a given sequence
of actions always ensures that a state with certain properties is reached.
We consider now these problems and formalize them by means of \emph{automated
 planning}.

We use DLs
to describe states of KBs, which may act as goals or preconditions.  A
\emph{plan} is a sequence of actions \emph{from a given set}, whose execution
leads an agent from the current
state to a state that satisfies a given goal.
\begin{definition}\label{def:planning}
  Let $\I=\tuple{\Delta^{\I},\cdot^{\I}}$ be a finite interpretation, $\Act$ a {finite}
  set of actions, and $\K$ a KB (the \emph{goal} KB).  A {finite} sequence
  $\tuple{\alpha_1,\ldots,\alpha_n}$ of ground instances of actions from $\Act$
  is called a \emph{plan for $\K$ from $\I$} (of \emph{length} $n$), if there
  exists a finite set $\Delta$ with $\Delta^{\I}\cap \Delta= \emptyset$ such
  that $S_{\alpha_1\cdots\alpha_n}(\I')\models \K$, where
  $\I'=\tuple{\Delta^{\I}\cup \Delta,\cdot^{\I}}$.
\end{definition}

Recall that actions in our setting do not modify the domain of an
interpretation. To support unbounded introduction of values in the data, the
definition of planning above allows for the domain to be expanded a-priori with
a finite set of fresh domain elements.

We can now define the first planning problems we study:
\begin{enumerate}[(P1)]
\item Given a set $Act$ of actions, a finite interpretation $\I$, and a goal KB
  $\K$, does there exist a plan for $\K$ from $\I$?
\item Given a set $Act$ of actions and a pair $\K_\mathit{pre}$, $\K$ of
  formulae, does there exist a substitution $\sigma$ and a plan for
  $\sigma(\K)$ from some finite $\I$ with $\I\models \sigma(\K_\mathit{pre})$?
\end{enumerate}
(P1) is the classic plan existence problem, formulated in the setting of GSD.
(P2)
also aims at deciding plan existence, but rather than the full actual state of
the data, we have as an input a \emph{precondition} KB, and we are interested
in deciding the existence of a plan from some of its models.
To see the relevance of (P2), consider the complementary problem: a `no'
instance of (P2) means that, from every relevant initial state, (undesired)
goals cannot be reached.
For instance, $\K_\mathit{pre}=\K_{ic}\land x:\msf{FinishedPrj}$ and $\K=
x:\msf{ActivePrj}$ may be used to check whether starting with GSD that
satisfies the integrity constraints and contains some finished project $p$, it
is possible to make $p$ an active project again.

\begin{example}
  Recall the interpretation $\I_1$ and the action $\alpha'_1$ from
  Example~\ref{example:verification}, and  the substitution $\sigma$
  from Example~\ref{example:parameters}, which gives us the following ground
  instance 
  of $\alpha_2$:
  \[
    \small
    \alpha_2'=
    \begin{array}[t]{@{}l@{}}
      \cactne{(\msf{e_1}:\msf{Empl} \land \msf{p_1}:\msf{Prj} \land
       \msf{p_2}:\msf{Prj} \land (\msf{e_1},\msf{p_1}):\msf{worksFor})}{}\\
      {(\msf{worksFor} \ominus \{(\msf{\msf{e_1}},\msf{p_1})\} \cdot
       \msf{worksFor} \oplus \{(\msf{\msf{e_1}},\msf{\msf{p_2}})\})}
    \end{array}
  \]

  The following \emph{goal} KB requires that $\msf{p_1}$ is not an active
  project, and that $\msf{e_1}$ is an employee. 
  \[
    \small
    \K_g = \kbneg(\msf{p_1}\,{:}\,\msf{ActivePrj}) \land
    \msf{e_1}\,{:}\,\msf{Empl}
  \]
  A plan for $\K_g$ from $\I_1$ is the sequence of actions
  $\tuple{\alpha'_2,\alpha'_1}$.
  The interpretation $S_{\alpha'_2\cdot\alpha'_1}(\I_1)$ that reflects the status
  of the data after applying
  $\tuple{\alpha'_2,\alpha'_1}$ looks as follows:
  \[
    \small
    \begin{array}{r@{~}c@{~}l}
      \msf{Prj}^{S_{\alpha'_2\cdot\alpha'_1}(\I_1)} &=& \{p_1,p_2\}\\
      \msf{ActivePrj}^{S_{\alpha'_2\cdot\alpha'_1}(\I_1)} &=& \{p_2\}\\
      \msf{Empl}^{S_{\alpha'_2\cdot\alpha'_1}(\I_1)} &=& \{e_1,e_7\}\\
      \msf{FinishedPrj}^{S_{\alpha'_2\cdot\alpha'_1}(\I_1)} &=& \{p_1\}\\
      \msf{worksFor}^{S_{\alpha'_2\cdot\alpha'_1}(\I_1)} &=& \{(e_1,p_2),(e_7,p_2)\}
    \end{array}
  \]
  Clearly,  $S_{\alpha'_2\cdot\alpha'_1}(\I_1)\models \K_1$.
\end{example}

Unfortunately, these problems are undecidable in general, which can
 be shown by a reduction from the Halting problem for Turing machines.

\begin{theorem}
  \label{thm:plan-existence-undecidable}
  The problems (P1) and (P2) are undecidable, already for \dlhp KBs and simple
  actions.
\end{theorem}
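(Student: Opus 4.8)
The plan is to reduce the halting problem for deterministic Turing machines on empty input, which is undecidable, to plan existence. Fix a deterministic Turing machine $M$ with state set $Q$ (initial state $q_0$, distinguished halting state $q_h$) and tape alphabet $\Sigma$ with blank symbol $\mathrm{b}$. A configuration of $M$ is encoded as a finite interpretation in which the used part of the tape is a $\msf{nxt}$-chain of cell individuals: concept names $\msf{Sym}_a$ ($a\in\Sigma$) record which cells carry symbol $a$, $\msf{Head}$ marks the scanned cell, a concept name $\msf{St}_q$ attached to the scanned cell records that $M$ is in state $q$, and auxiliary concept names $\msf{InUse}$, $\msf{Last}$ (the rightmost allocated cell) and a flag $\msf{Halted}$ record the allocated segment and termination. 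Theorem~\ref{thm:regression} is not used here; only the direct semantics $S_\alpha$ of actions matters.

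The action set $\Act$ contains one \emph{simple} action per transition of $M$, plus a one-shot ``boot'' action. For a transition $\delta(q,a)=(q',a',\mathsf{R})$ that stays inside the allocated tape, the action uses variables $x$ (the head) and $y$ (its $\msf{nxt}$-successor), with the precondition ``$x$ is the head, is in $\msf{St}_q$, carries $a$, and $(x,y)\in\msf{nxt}$'' (a Boolean combination of assertions), and, as effects, the basic actions rewriting the symbol of $x$, removing $x$ from and adding $y$ to $\msf{Head}$, and removing $x$ from $\msf{St}_q$ and adding $y$ to $\msf{St}_{q'}$. When the head is at $\msf{Last}$, a companion action instead \emph{allocates} a fresh cell: its precondition requires, in addition to $x$ being the head in $\msf{St}_q$ carrying $a$ and in $\msf{Last}$, that $y$ be unused, i.e.\ $y\notin\msf{InUse}$, $y\notin\SOME{\msf{nxt}}{\top}$, $y\notin\SOME{\msf{nxt}^{-}}{\top}$ and $y$ in none of the $\msf{Head},\msf{St}_q,\msf{Sym}_a$; its effects put $(x,y)$ into $\msf{nxt}$, mark $y$ with $\msf{InUse},\msf{Last},\msf{Sym}_{\mathrm{b}}$ and remove $\msf{Last}$ from $x$ (plus the symbol/head/state updates). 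The fresh cells are supplied by the a-priori domain expansion in Definition~\ref{def:planning}, which is exactly what makes the tape unboundedly extensible; left moves are symmetric (one may assume w.l.o.g.\ that $M$ never moves off the left end). Every transition leading to $q_h$ also executes $\msf{Halted}\oplus\{\msf{h}\}$ for a fixed individual $\msf{h}$. The boot action applies only to a designated individual $\msf{s}$, under the condition $\msf{s}{:}\msf{Init}\land\kbneg(\msf{s}{:}\msf{Booted})$, installs the length-one initial configuration at $\msf{s}$, and adds $\msf{s}$ to $\msf{Booted}$, so it fires at most once. All concepts used are concept names, nominals, or $\SOME{r}{\top}$ (hence in $\boldB^{+}$), and no inclusions occur in actions, so all actions are simple.

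The goal is the \dlhp assertion $\K=\msf{h}{:}\msf{Halted}$. For (P1) I take as input interpretation the one whose only non-empty fact is $\msf{s}\in\msf{Init}$: the plan must first boot and then simulate $M$. For (P2) I take $\K_\mathit{pre}$ to be $\msf{s}{:}\msf{Init}$ together with \dlhp inclusions of the shapes $A\ISA\NOT A$ and $\SOME{p}{\top}\ISA\NOT\SOME{p}{\top}$, which force $\msf{Head}$, every $\msf{St}_q$, every $\msf{Sym}_a$, $\msf{InUse}$, $\msf{Halted}$, $\msf{Booted}$ and the role $\msf{nxt}$ to be empty in every model; in any such model the only enabled action is boot, after which execution is forced to mimic $M$ exactly as in (P1). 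The substitution in (P2) is trivial since all relevant names are constants. In both cases applying a sequence of actions from $\Act$ to a well-formed configuration of $M$ yields a well-formed configuration one step further along $M$'s unique run, so $\msf{h}{:}\msf{Halted}$ is reachable iff $M$ eventually enters $q_h$, i.e.\ iff $M$ halts on empty input.

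The step I expect to be the main obstacle is \emph{soundness}, i.e.\ that a plan reaching the goal genuinely witnesses a halting run, with no way to cheat. Two points need care. First, for (P2) the input interpretation is not ours to choose, so $\K_\mathit{pre}$ must be strong enough that no ``contaminated'' model --- one already containing, say, a cell poised to enter $q_h$ or a pre-existing $\msf{nxt}$-edge --- can be exploited; this is why $\K_\mathit{pre}$ zeroes out all computation-relevant predicates and the boot action is one-shot. Second, one must establish the invariant that in a well-formed configuration there is a unique head cell carrying a unique state and a unique symbol, so (as $M$ is deterministic) at most one transition action is enabled and each action preserves well-formedness; and that the allocate action can append only truly unused cells --- the condition $y\notin\SOME{\msf{nxt}^{-}}{\top}$ being what rules out splicing a fresh cell into the middle of the chain, and $y\notin\msf{InUse}$ (together with the head being in use) ensuring $y\neq x$. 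Checking these invariants (in particular that already-allocated cells cannot be recycled to forge an accepting run) is the technical core; the remaining correspondence between plans and runs of $M$ is routine bookkeeping.
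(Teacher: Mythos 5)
Your reduction is correct and is essentially the paper's own proof: both simulate a deterministic Turing machine by encoding configurations as a $\mathit{next}$-chain of cells with symbol and state concepts, use one simple action per transition plus tape-extension actions that rely on the a-priori domain expansion of Definition~\ref{def:planning}, and for (P2) add a precondition KB that forces all simulation-relevant concepts and roles to be empty so the initial model cannot be contaminated. The remaining differences (a separate $\msf{Head}$/$\msf{Halted}$ marker, a one-shot boot action instead of a guarded initialization on a fixed individual, empty input instead of a word $w$) are cosmetic.
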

{Intuitively,  problem~(P1) is undecidable because we cannot
  know  
how many fresh objects need to be added to the domain of $\I$, but it becomes
decidable 
if
the size of $\Delta$ in
Definition~\ref{def:planning} is bounded.}
It is not difficult to
see that problem~(P2) remains undecidable even if the domain is assumed fixed
(as the problem definition quantifies existentially over interpretations, one
can choose interpretations with sufficiently large domains).  However, also
(P2) becomes decidable if we place a bound on the length of plans. More
precisely, the following problems are decidable.


\begin{enumerate}[(P2\textsubscript{b})]
\item[(P1\textsubscript{b})] 
{Given a set $\mathit{Act}$ of actions, a finite
  interpretation $\I$, a goal KB $\K$,  and a positive integer $k$, does there exist a plan
 for $\K$ from $\I$ where $|\Delta| \leq k$?}

\item[(P2\textsubscript{b})] Given a set of actions $Act$, a pair
  $\K_\mathit{pre},\K$ of formulae, and a positive integer $k$, does there exist a substitution $\sigma$ and a plan of length
  $\leq k$ for $\sigma(\K)$ from some finite interpretation $\I$ with
  $\I\models \sigma(\K_\mathit{pre})$?
\end{enumerate}

{We now study the complexity of these problems, assuming that the
  input bounds $k$ are coded in unary.}
The problem (P1\textsubscript{b})  can be solved in polynomial space, and
thus is not harder than deciding the existence of a plan in standard automated
planning formalisms such as propositional STRIPS~\cite{Bylander94strips}.
{In fact, the following lower bound can be proved by a reduction from
 the latter formalism, or by an adaptation of the Turing Machine reduction used
 to prove undecidability in Theorem~\ref{thm:plan-existence-undecidable}.}

\begin{theorem}
  \label{thm:finite-plan-existence-p1-pspace}
  The problem (P1\textsubscript{b}) is \pspace-complete for
  \OURDL KBs.
\end{theorem}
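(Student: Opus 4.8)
\textbf{Upper bound.}
The key observation is that, since $k$ is coded in unary, the interpretation $\I'$ over which planning takes place has a domain of size at most $|\Delta^{\I}|+k$, hence polynomial in the input size $n$; consequently each interpretation over this fixed domain fits in polynomial space, and there are at most $2^{p(n)}$ of them for a suitable polynomial $p$. The plan is to design a nondeterministic polynomial-space procedure, which by Savitch's theorem places (P1\textsubscript{b}) in \pspace. This procedure first guesses $j\in\{0,\dots,k\}$ and fixes $\I'$ to be $\I$ extended by $j$ fresh domain elements (the guess for $\Delta$); it then maintains only a \emph{current interpretation} $\mathcal{J}$ (initially $\I'$) and a binary step counter of $p(n)$ bits. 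At each step it checks whether $\mathcal{J}\models\K$ (a model-checking task solvable bottom-up in polynomial time) and accepts if so; if the counter has overflowed it rejects, which is sound because whenever a plan exists there is one whose length does not exceed the number of distinct interpretations over the fixed domain, obtained by splicing out any repeated interpretation, action application being deterministic once the ground action is fixed; otherwise it nondeterministically picks a ground instance $\alpha'$ of an action in $\Act$ and replaces $\mathcal{J}$ by $S_{\alpha'}(\mathcal{J})$. Only the images under $\cdot^{\I}$ of the substituted individual names affect $S_{\alpha'}$, and these images are preserved by every action application, so only at most exponentially many ground instances are relevant and each has polynomial size.

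\textbf{Lower bound.}
For \pspace-hardness I would reduce from the plan existence problem for propositional STRIPS, which is \pspace-complete~\cite{Bylander94strips}. Given fluents $f_1,\dots,f_m$, introduce concept names $A_{f_1},\dots,A_{f_m}$ and a single individual name $o$; let $\I$ have domain $\{o^{\I}\}$ with $o^{\I}\in A_{f_i}^{\I}$ iff $f_i$ holds initially. A STRIPS operator with positive and negative precondition sets $P^+,P^-$, add list $\mathit{Add}$, and delete list $\mathit{Del}$ is translated into an action with precondition $\bigwedge_{f\in P^+}(o:A_f)\land\bigwedge_{f\in P^-}\kbneg(o:A_f)$ and body the composition of the basic actions $(A_f\ominus\{o\})$ for all $f\in\mathit{Del}$, followed by the basic actions $(A_f\oplus\{o\})$ for all $f\in\mathit{Add}$ (so additions override deletions). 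The goal KB is $\bigwedge_{f\in G^+}(o:A_f)\land\bigwedge_{f\in G^-}\kbneg(o:A_f)$ for the positive and negative goal literals $G^+,G^-$, and we set $k=0$. Since $o^{\I}$ is the only domain element ever touched and no fresh element is needed, the ground actions simulate the STRIPS operators step by step; a step whose precondition fails is a no-op, as its else branch is $\varepsilon$. Hence plans for the goal KB from $\I$ with $|\Delta|\le 0$ correspond, after deleting the no-ops, exactly to STRIPS plans. The reduction is polynomial, uses only actions with preconditions and KBs that are conjunctions of (possibly negated) concept assertions, so the hardness already holds for \dlhp{} KBs and simple actions. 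Alternatively, \pspace-hardness can be obtained by specializing the Turing-machine reduction behind Theorem~\ref{thm:plan-existence-undecidable} to a polynomial-space machine, representing its polynomially many tape cells by the $k$ fresh elements.

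\textbf{Expected main obstacle.}
Neither direction is deep, but care is needed in: (i) making the combinatorics of the upper bound precise, namely that only at most exponentially many ground instances and reachable interpretations matter, so the plan length can be capped and the step counter kept of polynomial size; and (ii) verifying that a single successor computation $S_{\alpha'}(\mathcal{J})$, in particular the evaluation of complex concepts with number restrictions and of the role constructors (inverse, union, difference, restriction) over a finite interpretation, stays polynomial. Both are routine, but are exactly what makes the nondeterministic polynomial-space bound go through.
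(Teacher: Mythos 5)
Your proposal is correct and follows essentially the same route as the paper: the upper bound is the identical nondeterministic polynomial-space search over interpretations with a fixed polynomial-size domain, capped at the (exponential) number of distinct interpretations and using polynomial-space model checking. For the lower bound you spell out the reduction from propositional STRIPS plan existence, which is one of the two options the paper itself names (its appendix instead adapts the Turing-machine reduction from Theorem~\ref{thm:plan-existence-undecidable}); both are routine and your version is fine.
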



Now we establish the complexity of (P2\textsubscript{b}), both in the general
setting (i.e.,\,when $\K_\mathit{pre}$ and $\K$ are in \OURDL), and for the
restricted case of \dlhp KBs and simple actions.  For (SV), considering the
latter setting allowed us to reduce the complexity from co\nexptime to
co\np. Here we obtain an analogous result and go from \nexptime-completeness to
\np-completeness.

\begin{theorem}
  \label{thm:finite-plan-existence-p2-nexptime}
  The problem (P2\textsubscript{b}) is \nexptime-complete.  It is \np-complete
  if $\K_\mathit{pre},\K$ are expressed in \dlhp and all actions in $\Act$ are
  simple.
\end{theorem}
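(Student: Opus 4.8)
The plan is to obtain both upper bounds with a single non-deterministic procedure that compiles a bounded planning instance, through the regression transformation $\STR$ of Section~\ref{sec:capturing}, into one polynomial-size finite-satisfiability instance over the relevant constraint language; and to obtain both lower bounds by reducing finite satisfiability to length-$1$ planning with a no-op action.

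For the upper bounds, given an instance of (P2\textsubscript{b}) I would first guess a length $n\le k$ and, for each position $i\le n$, an action of $\Act$ together with a grounding into individual names taken from a polynomial pool of fresh names, producing a ground plan $\bar\alpha=\alpha_1\cdots\alpha_n$; a grounding $\sigma$ of $\K_\mathit{pre}$ and $\K$ over the same pool; and a ``branch'' $c$ that, for every conditional sub-action $\cact{\K'}{\alpha'}{\alpha''}$ occurring in $\bar\alpha$, selects either $\alpha'$ or $\alpha''$. Since $k$ is coded in unary, all of this is of polynomial size. Following the argument preceding Theorem~\ref{thm:verification-nexp}, let $\STR^c_{\bar\alpha}(\sigma(\K))$ be the fragment of $\STR_{\bar\alpha}(\sigma(\K))$ determined by $c$, conjoined with the (suitably regressed) precondition of each conditional, or its negation, according to $c$; with structure sharing it has polynomial size, and $\STR_{\bar\alpha}(\sigma(\K))$ is finitely satisfiable, in conjunction with any formula, iff $\STR^c_{\bar\alpha}(\sigma(\K))$ is for some $c$. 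By Theorem~\ref{thm:regression}, $S_{\bar\alpha}(\I')\models\sigma(\K)$ iff $\I'\models\STR_{\bar\alpha}(\sigma(\K))$, so it remains to express ``there is a finite $\I\models\sigma(\K_\mathit{pre})$ and a finite set $\Delta$ of fresh elements with $\I'\models\STR^c_{\bar\alpha}(\sigma(\K))$'' as a single finite-satisfiability test. For this I would take a fresh concept name $D$ and form $\Phi_{\sigma,\bar\alpha,c}$ as the conjunction of: the confinement axioms $A\ISA D$, $\SOME{p}{\top}\ISA D$, $\SOME{p^-}{\top}\ISA D$ and the assertions $o:D$, for all concept names $A$, role names $p$ and individual names $o$ occurring in the input, in $\sigma$ or in $\bar\alpha$; the relativization of $\sigma(\K_\mathit{pre})$ to $D$; and $\STR^c_{\bar\alpha}(\sigma(\K))$. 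In any model of $\Phi_{\sigma,\bar\alpha,c}$ the $\neg D$-elements lie in no name extension and denote no individual, so they form an admissible set $\Delta$ of fresh elements, while $D$ carries the genuine initial interpretation $\I$; the relativization makes the restriction of the model to $D$ a model of $\sigma(\K_\mathit{pre})$, and $\STR^c_{\bar\alpha}(\sigma(\K))$ holds in the whole model precisely when running $\bar\alpha$ on it yields a model of $\sigma(\K)$, by Theorem~\ref{thm:regression}. Conversely every witness induces such a model, and since neither \OURDL nor \dlhp bounds the size of $\neg D$, an arbitrary finite $\Delta$ is allowed. Hence the instance is positive iff $\Phi_{\sigma,\bar\alpha,c}$ is finitely satisfiable for some guess.

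In the general case $\Phi_{\sigma,\bar\alpha,c}$ is an \OURDL KB of polynomial size --- relativization, the confinement axioms and the output of $\STR$ all stay within \OURDL, which offers Boolean combinations of axioms, role union, difference and restriction, and nominals --- so, as the guessing is polynomial and finite satisfiability of \OURDL is in \nexptime (Theorem~\ref{thm:dl-sat}), (P2\textsubscript{b}) is in \nexptime. For \dlhp with simple actions I would use that \dlhp concepts and roles are ``local'' enough that the restriction to $D$ is already enforced by the confinement axioms, so no relativization of inclusions or of $\boldB^+$-assertions is needed, and that regression through simple actions substitutes only $\boldB^+$-concepts for concept names and unions/differences/restrictions of role names for role names; hence the regressed assertions and the conditional literals stay within the \dlhp assertion language. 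The remaining ingredient --- showing that the regressed \emph{inclusions} of $\K$, which here occur \emph{positively} (in contrast to static verification, where only negated occurrences arise and can be turned into \dlhp assertions over fresh individuals), can be accommodated within \np --- is the main obstacle; I would address it either by a dedicated \dlhp-compilation, extending the one behind Theorem~\ref{thm:reduction-dllite}, or by establishing a polynomial-model property for $\Phi_{\sigma,\bar\alpha,c}$ (exploiting that \dlhp has no number restrictions, that actions act uniformly on element types, and that the relevant \dlhp/ABox constraints admit small canonical models) and then guessing a polynomial model directly, checking $\I\models\sigma(\K_\mathit{pre})$ and $S_{\bar\alpha}(\I')\models\sigma(\K)$ in polynomial time. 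Together with Theorem~\ref{thm:dllite-complexity} this gives the \np upper bound.

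For the matching lower bounds I would set $k=1$ and $\Act=\{\varepsilon\}$, so that the only plans --- the empty one and $\langle\varepsilon\rangle$ --- leave the interpretation unchanged, and take $\K_\mathit{pre}=\K=\K_0$; choosing $\Delta=\emptyset$, this instance is a ``yes''-instance iff some finite interpretation satisfies $\K_0$. Instantiating $\K_0$ with an \OURDL KB yields \nexptime-hardness by Theorem~\ref{thm:dl-sat}, and with a \dlhp KB yields \np-hardness by Theorem~\ref{thm:dllite-complexity} (with $\Act$ empty, simplicity is vacuous). Thus the general-case upper bound and both lower bounds are routine adaptations of the static-verification machinery, and the polynomial accommodation of positively-occurring regressed goal inclusions in \dlhp is the step requiring the most care.
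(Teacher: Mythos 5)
Your overall strategy for the general case coincides with the paper's: guess the substitution $\sigma$, a ground sequence of at most $k$ actions, and one branch per conditional; regress the goal through that branch to obtain a polynomial-size fragment $\STR^c_{\bar\alpha}(\sigma(\K))$ of $\STR_{\bar\alpha}(\sigma(\K))$ (the paper formalizes exactly this as a non-deterministically constructed member of a set $\mathbf{TR}^{\wedge}_{\bar\alpha}(\sigma(\K))$); and reduce to a single finite-satisfiability test whose complexity is given by Theorems~\ref{thm:dl-sat} and~\ref{thm:dllite-complexity}. Two points differ. First, your lower bounds (take $k=1$, $\Act=\{\varepsilon\}$, $\K_\mathit{pre}=\K=\K_0$) reduce finite satisfiability directly, whereas the paper derives hardness from the complements of Theorems~\ref{thm:verification-nexp} and~\ref{thm:conp-hardness}; both are correct, yours is more economical. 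Second, your relativization to a fresh concept $D$ with confinement axioms, so that the $\neg D$-elements play the role of the expansion set $\Delta$, is an ingredient the paper does not have: its proof simply asserts that $P$ is a plan iff $\sigma(\K_\mathit{pre})\land \STR_{\alpha_1\cdots\alpha_m}(\sigma(\K))$ is finitely satisfiable, which implicitly takes $\Delta=\emptyset$. Your treatment of $\Delta$ is the more faithful one with respect to Definition~\ref{def:planning}.

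The gap is the one you flag yourself: the \np{} upper bound for \dlhp{} with simple actions is not actually established. Regression through simple actions substitutes $\boldB^+$-concepts for concept names and unions/differences of roles for role names \emph{inside the inclusions of the goal KB}, and in $\STR^c_{\bar\alpha}(\sigma(\K))$ these regressed inclusions occur positively. They are not \dlhp{} inclusions, so Theorem~\ref{thm:dllite-complexity} does not apply directly, and the device of Theorem~\ref{thm:reduction-dllite} --- rewriting \emph{negated} inclusions $\kbneg(B_1\ISA B_2)$ as assertions $o:B_1\AND\neg B_2$ over fresh individuals --- is unavailable because nothing is negated here. You correctly isolate this as the delicate step, but you only name two candidate strategies (a dedicated completion-style compilation, or a polynomial-model property) without carrying either out, so \np-membership remains unproven in your argument. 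You should either extend the rewriting of Theorem~\ref{thm:dllite-complexity} so that it tolerates inclusions between $\boldB^+$-concepts and between Boolean combinations of basic roles on its input, or prove the small-model property you allude to; until one of these is done, the second claim of the theorem is not justified by your proposal.
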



Now we consider three problems that are related to ensuring plans that
\emph{always} achieve a given goal, no matter what the initial data is. They
are variants of the so-called \emph{conformant} planning, which deals with
planning under various forms of incomplete information.  In our case, we assume
that we have an incomplete description of the initial state, since we only know
it satisfies a given precondition, but have no concrete interpretation.

The first of such problems is to `certify' that a candidate plan is indeed a
plan for the goal, for every possible database satisfying the precondition.


\begin{enumerate}[(C)]
\item Given a sequence $P=\tuple{\alpha_1,\ldots,\alpha_n}$ of actions and
  formulae $\K_\mathit{pre}$, $\K$, is $\sigma(P)$ a plan for $\sigma(\K)$ from
  every finite interpretation $\I$ with $\I\models \sigma(\K_\mathit{pre})$,
  for every possible substitution $\sigma$?
\end{enumerate}

Finally, we are interested in the existence of a plan that always achieves the
goal, for every possible state satisfying the precondition.  Solving this
problem corresponds to the automated \emph{synthesis} of a program for reaching
a certain condition.  We formulate the problem with and without a bound on the
length of the plans we are looking for.

\begin{enumerate}[(S)]
\item[(S)] Given a set $Act$ of actions and formulae $\K_\mathit{pre}$, $\K$,
  does there exist a sequence $P$ of actions such that $\sigma(P)$ is a plan
  for $\sigma(\K)$ from every finite interpretation $\I$ with
  $\I\models\sigma(\K_\mathit{pre}$), for every possible substitution $\sigma$?

\item[(S\textsubscript{b})] Given a set $Act$ of actions, formulae
  $\K_\mathit{pre},\K$, and a positive integer $k$, does there exist a sequence
  $P$ of actions such that $\sigma(P)$ is of length at most $k$ and is a plan
  for $\sigma(\K)$ from every finite interpretation $\I$ with $\I\models
  \sigma(\K_\mathit{pre})$, for every possible substitution $\sigma$?
\end{enumerate}

We conclude  with the complexity of these problems:
\begin{theorem}
  \label{thm:planning-complexity}
  The following hold:
  \begin{enumerate}[-]
  \item Problem (S) is undecidable, already for \dlhp KBs and simple actions.
  \item Problems (C) and (S\textsubscript{b}) are co\nexptime-complete.
  \item If $\K_\mathit{pre},\K$ are expressed in \dlhp and all actions in
    $\Act$ are simple, then (C) is co\np-complete and (S\textsubscript{b}) is
    $\np^{\np}$-complete.
  \end{enumerate}
\end{theorem}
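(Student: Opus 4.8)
The plan is to handle the three items separately, in each case reusing the regression transformation $\STR_{\alpha}(\cdot)$ from Theorem~\ref{thm:regression} together with the complexity of finite (un)satisfiability established in Theorems~\ref{thm:dl-sat} and~\ref{thm:dllite-complexity}.

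\textbf{Undecidability of (S).} I would reduce from problem (P2), shown undecidable in Theorem~\ref{thm:plan-existence-undecidable}, or directly adapt the Turing-machine encoding used there. The key observation is that (P2) asks for the existence of a plan from \emph{some} model of $\K_\mathit{pre}$, whereas (S) asks for a single program that works for \emph{all} of them. To bridge the gap I would make the precondition $\K_\mathit{pre}$ categorical enough (e.g., pin down the relevant part of the interpretation via nominals and equality/counting constraints) so that all models relevant to the encoding agree on the data the machine reads; then a universal plan exists iff the machine halts. Alternatively, since the Turing-machine construction behind Theorem~\ref{thm:plan-existence-undecidable} already starts from an essentially fixed initial configuration, the same reduction shows (S) undecidable with only cosmetic changes. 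I expect this to be routine given the earlier undecidability proof.

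\textbf{(C) and (S\textsubscript{b}) are co\nexptime-complete.} For the \emph{upper bound} on (C): the sequence $P=\tuple{\alpha_1,\dots,\alpha_n}$ is \emph{not} a universal plan iff there is a substitution $\sigma$ and a finite interpretation $\I$ with $\I\models\sigma(\K_\mathit{pre})$ but $S_{\sigma(P)}(\I)\not\models\sigma(\K)$. As in Theorem~\ref{thm:verification}, grounding the finitely many variables by fresh individuals reduces this to finite satisfiability of $\sigma^{*}(\K_\mathit{pre})\land\kbneg\STR_{\sigma^{*}(P)}(\sigma^{*}(\K))$. Although $\STR$ may blow up exponentially, the same trick used for (SV) applies: guess one disjunct $\STR^{c}$ corresponding to a fixed resolution of every conditional in $P$; this conjunct has polynomial size, can be produced nondeterministically in polynomial time, and by Theorem~\ref{thm:dl-sat} its finite satisfiability is decidable in \nexptime. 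Hence the complement of (C) is in \nexptime, i.e.\ (C) is in co\nexptime. For (S\textsubscript{b}): since $k$ is in unary, a universal bounded plan, if it exists, is a sequence of at most $k$ ground-instantiable actions from $\Act$, each of polynomial size, so the whole plan $P$ is of polynomial size; guess $P$, then check that $P$ is a universal plan, i.e.\ that the (C)-instance $\tuple{P,\K_\mathit{pre},\K}$ is a `yes' --- a co\nexptime check. This gives an $\exists\forall$ pattern that still sits inside co\nexptime (the outer existential over a polynomial-size object does not add an alternation beyond what co\nexptime absorbs: more precisely, the complement --- ``for every bounded $P$, $P$ is not a universal plan'' --- is in \nexptime by guessing, for a universally-quantified-away $P$, the witnessing $\sigma^{*}$, $\I$ and $\STR^{c}$; care is needed here and is the one subtle point). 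For the \emph{lower bounds} I would reduce from finite \emph{un}satisfiability of \OURDL KBs (co\nexptime-hard by Theorem~\ref{thm:dl-sat}): take $\K_\mathit{pre}$ to be the KB in question, $\K$ a trivially unsatisfiable goal (e.g.\ $\top\ISA\bot$), and $P=\varepsilon$; then $P$ is a universal plan iff $\K_\mathit{pre}$ has no finite model, which also serves for (S\textsubscript{b}) with $k=0$ and a singleton action set.

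\textbf{The \dlhp/simple-action case: (C) in co\np, (S\textsubscript{b}) is $\np^{\np}$-complete.} Here I would mirror the argument for Theorem~\ref{thm:reduction-dllite}: for simple actions, the regression $\STR$ of a \dlhp formula stays within \dlhp after choosing a disjunct $\STR^{c}$ (the inclusions and $\boldB^{+}$-concepts are closed under the substitutions performed by $\STR$ for simple actions, and negations only land in front of assertions), so non-(C) reduces to finite satisfiability of a polynomial-size \dlhp KB, which by Theorem~\ref{thm:dllite-complexity} is in \np; hence (C) is in co\np, and co\np-hardness follows from Theorem~\ref{thm:conp-hardness} (or from co\np-hardness of \dlhp finite unsatisfiability) via the same $P=\varepsilon$ reduction as above. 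For (S\textsubscript{b}): guess the polynomial-size plan $P$ (an \np step), then verify that $P$ is a universal plan, which is exactly a (C)-check and hence an \np\ oracle call --- giving the $\np^{\np}$ upper bound. For $\np^{\np}$-hardness I would reduce from a canonical $\Sigma_2^{p}$-complete problem such as $\exists\forall$-QBF: encode the existential variables by a choice of which basic actions to put into the plan $P$, and the universal variables together with the matrix by the \dlhp precondition/goal pair, so that a universal bounded plan exists iff the QBF is true; getting this encoding to work cleanly within simple actions and \dlhp constraints is the part that will require the most care, but it is analogous in spirit to standard planning-complexity reductions.

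\textbf{Main obstacle.} The genuinely delicate point is the \emph{exact} complexity classification of the outer-quantifier problems (S\textsubscript{b}) --- showing that guessing a polynomial plan on top of a co\nexptime (resp.\ co\np) verification does not push the problem to $\Sigma_2^{\nexptime}$ (resp.\ strictly above $\np^{\np}$), by re-expressing the complement so that the plan is the \emph{universally} quantified object and only polynomially-sized certificates are guessed --- and, on the hardness side, making the $\Sigma_2^{p}$-hardness encoding for (S\textsubscript{b}) in the \dlhp setting precise while respecting the simple-action and \dlhp syntactic restrictions.
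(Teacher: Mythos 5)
Your plan matches the paper's proof on every item: (S) via the (P2) reduction, the (C) upper bound via regression plus the nondeterministic one-disjunct selection of $\STR$, the (S\textsubscript{b}) upper bound by complementing and enumerating the exponentially many bounded candidate plans inside \nexptime, lower bounds from finite (un)satisfiability, and the $\np^{\np}$ bounds via guess-plus-\np-oracle and a reduction from $\exists\forall$-QBF in which the existential variables are fixed by the choice of plan and the universal ones by the models of $\K_\mathit{pre}$. The two points you flag as delicate are resolved exactly as in the paper, so the proposal is correct and follows essentially the same route.
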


\section{Related Work}
\label{sec:related}

Using DLs to understand the properties of systems while fully taking into
account both structural and dynamic aspects is very challenging \cite{WoZa99}.
Reasoning in DLs extended with a temporal dimension becomes quickly undecidable~\cite{Arta06}, unless severe restrictions on the expressive power of the DL are
imposed \cite{AKRZ11}.  An alternative approach to achieve decidability is to
take a so-called ``functional view of KBs'' \cite{Leve84}, according to which
each state of the KB can be queried via logical implication, and the KB is
progressed from one state to the next through forms of update \cite{CDLR11}.
This makes it possible (under suitable conditions) to \emph{statically verify}
(temporal) integrity constraints over the evolution of a system
\cite{BaGL12,BCMD*13}.

Updating databases, and logic theories in general, is a classic topic in
knowledge representation, discussed extensively in the literature, cf.\
\cite{FKUV86,KatsunoM91}.  The updates described by our action
language are similar in spirit to the knowledge base updates studied in other
works, and in particular, the ABox updates considered in \cite{LLMW11},
and \cite{KhZC13}. As our updates are done directly on interpretations
rather than on (the instance level of) knowledge
bases, 
we do not encounter the expressibility and succinctness problems faced there.

Concerning the reasoning problems we tackle, verifying consistency of
transactions is a crucial problem that has been studied extensively in
Databases. It has been considered for different kinds of transactions and
constraints, over traditional relational databases \cite{ShSt89},
object-oriented databases \cite{SpBa98,BoKi94},
and deductive databases \cite{KoSS87}, to name a few. Most of these works adopt
expressive formalisms like (extensions of) first or higher order predicate
logic \cite{BoKi94}, or undecidable tailored languages \cite{ShSt89} to express
the constraints and the operations on the data. Verification systems are often
implemented using theorem provers, and complete algorithms cannot be devised.

As
mentioned, the problems studied in Section~\ref{sec:planning} are closely
related to automated planning, a topic extensively studied in AI.
DLs have been employed to reason about actions, goals, and plans, as well as
about the application domains in which planning is deployed, see \cite{Gil05}
and its references.  Most relevant to us is the significant body of work on
DL-based action languages \cite{BLMSW05,Milicic2008,BaLL10,LLMW11,BaZa13}.
In these formalisms, DL constructs are used to give conditions on the effects
of action execution, which are often non-deterministic.  A central problem
considered is the \emph{projection problem}, which consists in deciding whether
every possible execution of an action sequence on a possibly incomplete state
will lead to a state that satisfies a given property.  Clearly, our
certification problem (C), which involves an incomplete
initial state, is a variation of the projection problem.  However, we do not
face the challenge of having to consider different possible executions of
non-deterministic actions.  Many of our other reasoning problems are similar to
problems considered in these works, in different forms and contexts.  A crucial
difference is that our well-behaved action language allows us to obtain
decidability even when we employ full-fledged TBoxes for specifying goals,
preconditions, and domain constraints.  To the best of our knowledge, previous
results rely on TBox acyclicity to ensure decidability.

\section{Conclusions}
\label{sec:conclusions}

We have considered graph structured data that evolve as a result of updates
expressed in a powerful yet well-behaved action language.  We have studied
several reasoning problems that support the static analysis of actions and
their effects on the state of the data.  We have shown the decidability of most
problems, and in the cases where the general problem is undecidable, we have
identified decidable restrictions
and have characterized the computational complexity
for a very expressive DL and a variant of \dlc.  We believe this work provides
powerful tools for analyzing the effects of executing complex actions on
databases, possibly in the presence of integrity constraints expressed in rich
DLs.  Our upper bounds rely on a novel KB transformation technique,
which enables to reduce most of the reasoning tasks to
finite (un)satisfiability in a DL.
This calls for
developing finite model reasoners for DLs (we note that \OURDL does not have
the finite model property). It also remains to better understand the complexity
of finite model reasoning in different variations of \dlc. E.g., extensions of
\dlhp with role functionality would be very useful in the context of graph
structured data.  Generalizing the positive decidability results to logics
with powerful identification constraints, like the ones considered
in~\cite{rnf2013}, would also be of practical importance.  Given that the considered problems are intractable even for
weak fragments of the core \dlc and very restricted forms of actions, it
remains to explore how feasible these tasks are
in practice, and whether there are meaningful restrictions that make them
tractable.


\section*{Acknowledgments}
This research has been partially supported by FWF
projects~T515-N23 and P25518-N23, by WWTF project~ICT12-015, by EU IP
Project Optique~FP7-318338, and by the Wolfgang Pauli Institute.

\clearpage
\bibliographystyle{plain}
\bibliography{main-bib}

 \clearpage
 \section*{Appendix}

\begin{proof}[Proof of Theorem~\ref{thm:verification}]
  \textit{(i)} to \textit{(ii)}.
  Assume there exist a ground instance $\alpha'$ of $\alpha$ and
  a finite interpretation $\curlyI$ such that $\curlyI\models \K$ and
  $S_{\alpha'}(\curlyI)\not\models \K$. Then by Theorem~\ref{thm:regression}, $
  \I\not\models \STR_{\alpha'}(\K)$. Thus $ \I\models \neg
  \STR_{\alpha'}(\K)$. Suppose $o_1\rightarrow x_1,\ldots,o_n\rightarrow x_n$
  is the substitution that transforms $\alpha$ into $\alpha'$. Suppose also
  $o_1'\rightarrow x_1,\ldots,o_n'\rightarrow x_n$ is the substitution that
  transforms $\alpha$ into $\alpha^*$. Take the interpretation $\I^*$ that
  coincides with $\I$ except for $(o_i')^{\I^*}=(o_i)^{\I}$. Then $\I^*\models
  \K\land \neg \STR_{\alpha^{*}}(\K)$.

  \textit{(ii)} to \textit{(i)}.
  Assume $\K\land \neg \STR_{\alpha^{*}}(\K)$ is finitely
  satisfiable, i.e., there is an interpretation $\I$ such that $\I\models \K$
  and $\I\not \models \STR_{\alpha^{*}}(\K)$. Then by
  Theorem~\ref{thm:regression}, $S_{\alpha^*}(\curlyI)\not\models \K$.
\end{proof}

\newcommand{\tro}{\overline{\mathsf{TR}}}
\newcommand{\tron}{\tro'}
\newcommand{\TB}{\overline{\mathbf{TR}}}
\newcommand{\TBC}{\mathbf{TR}^{\wedge}}

\begin{proof}[Proof of Theorem~\ref{thm:verification-nexp}]
For co\nexptime-hardness, we note that finite unsatisfiability of \OURDL KBs can be reduced
in polynomial time to static verification in the presence of \OURDL
KBs. Indeed, a KB $\K$ is finitely satisfiable iff $(A' \oplus \{o\})$ is not
$(\K\land (A \ISA \NOT A') \land (o:A))$-preserving, where $A$, $A'$ are fresh
concept names and $o$ is a fresh individual.

\added[MO]
{Obtaining a matching upper bound is slightly more involved. 
It follows from Theorem~\ref{thm:verification} that the complement of
static verification in the presence of \OURDL
KBs reduces
to finite satisfiability of a KB $\K \land \kbneg \STR_{\alpha^*}(\K)$ in
\OURDL, but unfortunately, this reduction is exponential
in general. Hence
we use an alternative 
reduction that allow us to \emph{non-deterministically}   build in polynomial
time a formula $\K'$ of polynomial size, such that   $\K \land \K'$ is
satisfiable iff $\K \land \kbneg \STR_{\alpha^*}(\K)$ is satisfiable. The upper
bound then follows from this and the fact that finite satisfiability in \OURDL
is \nexptime-complete (c.f.\ Theorem~\ref{thm:dl-sat}).}
\todo{the rest of the proof is new}

To obtain this non-deterministic   polynomial time many-one reduction, it is
convenient to first define a minor variation $\tro_\alpha(\K)$ of the
transformation above, which generates an already negated KB.
\begin{align*}
\tro_\varepsilon(\K)= & \kbneg \K \\
 \tro_{(A \oplus C)\cdot
     \alpha}(\K)= &(\tro_{\alpha}(\K))_{A\gets A\OR C} \\
  \tro_{(A \ominus C)\cdot
     \alpha}(\K)= &(\tro_{\alpha}(\K))_{A\gets A\AND \neg C} \\
   \tro_{(p \oplus r)\cdot
     \alpha}(\K)= &(\tro_{\alpha}(\K))_{p\gets p\cup r} \\
  \tro_{(p \ominus r)\cdot
     \alpha}(\K)= &(\tro_{\alpha}(\K))_{p\gets p\setminus r} \\
\tro_{(\cact{\K_1}{\alpha_1}{\alpha_2})\cdot
        \alpha}(\K) = &
        \big (\K_1 \,{\land}\,\tro_{\alpha_1\cdot\alpha}(\K) \big ) \lor
      \big ( \kbneg \K_1 \,{\land}\,\tro_{\alpha_2\cdot\alpha}(\K) \big )
  \end{align*}
It can be shown by a straightforward induction on $s(\alpha)$  (as defined in
the Proof of Theorem~\ref{thm:regression}) that
$\tro_\alpha(\K)$ is logically equivalent to $\kbneg \STR_\alpha(\K)$ for
every $\K$ and every $\alpha$.
Hence, by Theorem \ref{thm:regression},
$\K \land \tro_{\alpha^*}(\K)$ is finitely satisfiable iff
$\K \land \kbneg \STR_{\alpha^*}(\K)$ is finitely satisfiable iff
 $\alpha$ is not $\K$-preserving.

Now, for the desired reduction, we use a non-deterministic version
of $\tro_\alpha(\K)$ that is defined analogously 
but in the last case, for the conditional axioms, we non-deterministically
choose between $\K_1 \,{\land}\,\tro_{\alpha_1\cdot\alpha}(\K)$, or
$ \kbneg \K_1 \,{\land}\,\tro_{\alpha_2\cdot\alpha}(\K)$, rather than
considering the disjunction of both. We denote by $\TB_\alpha(\K)$ the set of all the KBs  obtained this way, that is:
\begin{align*}
  \TB_\varepsilon(\K)= &  \{ \kbneg \K \} \\
   \TB_{(A \oplus C)\cdot
     \alpha}(\K)= &  \{ \K'_{A\gets A\OR C} \mid \K' \in \TB_{\alpha}(\K)\} \\
   \TB_{(A \ominus C)\cdot
     \alpha}(\K)= &  \{ \K'_{A\gets A\AND \neg C} \mid \K' \in \TB_{\alpha}(\K)\} \\
   \TB_{(p \oplus r)\cdot
     \alpha}(\K)= &  \{ \K'_{p\gets p\cup r} \mid \K' \in
   \TB_{\alpha}(\K) \}  \\
   \TB_{(p \ominus r)\cdot
     \alpha}(\K)= & \{ \K'_{p\gets p\setminus r} \mid \K' \in \TB_{\alpha}(\K)\} \\
   \TB_{(\cact{\K_1}{\alpha_1}{\alpha_2})\cdot
        \alpha}(\K) = &  \{ \K_1 \,{\land}\, \K' \mid \K' \in
      \TB_{\alpha_1\cdot\alpha}(\K)  \} \cup \{ \kbneg \K_1 \,{\land}\, \K'
      \mid \K' \in \TB_{\alpha_2\cdot\alpha}(\K) \}
\end{align*}
It is easy to see that $|\TB_\alpha(\K)|$ may be
exponential in $\alpha$ and $\K$, but each $\K' \in \TB_\alpha(\K)$ is
of polynomial size and can be built (non-deterministically) in polynomial
time.  It is only left to show that $\K \land \tro_{\alpha}(\K)$ is finitely satisfiable iff
there is some $\K' \in \TB_\alpha(\K)$ such that $\K \land \K'$ is finitely
satisfiable.
This is a consequence of the fact that, for every interpretation $\I$,
 $\I \models \tro_{\alpha}(\K)$ iff there is some $\K' \in \TB_\alpha(\K)$
such that $\I \models \K'$.

 We show this by induction on
$s(\alpha)$.
The base case is straightforward: if $\alpha = \epsilon$,
then $\TB_\alpha(\K) = \{\tro_{\alpha}(\K)\}$.
For the inductive step, we first consider $\alpha  = (A \oplus C) \cdot
\alpha'$.
First we assume that
$\I \models \tro_{\alpha}(\K)$. That is,
$\I \models (\tro_{\alpha'}(\K))_{A\gets A\OR C}$. We can apply the induction
hypothesis to infer that there exists $\K' \in \TB_{\alpha'}(\K)$ such that
$\I \models \K'_{A\gets A\OR C}$, which implies that exists $\K'' = \K'_{A\gets A\OR C}$
such that $\K'' \in \TB_{\alpha}(\K)$ and $\I \models \K''$ as desired.
For the converse, if  $\I \models \K''$ for some  $\K'' \in
\TB_{\alpha}(\K)$, by definition
we have that there is some $\K' \in \TB_{\alpha'}(\K)$ such that $\I \models
\K'_{A\gets A\OR C}$. Using the induction hypothesis we get  $\I \models
\TB_{\alpha'}(\K)_{A\gets A\OR C}$, that is,  $\I \models\TB_{\alpha}(\K)$
as desired.
The cases of  $\alpha  = (A \ominus C) \cdot \alpha'$,   $\alpha  = (p \oplus
r) \cdot \alpha'$,  and   $\alpha  = (p \ominus r) \cdot \alpha'$ are
analogous.

Finally, consider $\alpha = (\cact{\K_1}{\alpha_1}{\alpha_2})\cdot
        \alpha'$. We first show that if $\I \models \tro_{\alpha}(\K)$, then
there is some $\K' \in \TB_\alpha(\K)$ such that $\I \models \K'$.
By definition,  $\tro_{\alpha}(\K) = \big (\K_1 \,{\land}\,\tro_{\alpha_1\cdot\alpha}(\K) \big ) \lor
      \big ( \kbneg \K_1 \,{\land}\,\tro_{\alpha_2\cdot\alpha}(\K) \big )$.
  So, if $\I \models \tro_{\alpha}(\K)$, then one of $\I \models \K_1
  \,{\land}\,\tro_{\alpha_1\cdot\alpha}(\K)$ or
  $\I \models \kbneg \K_1 \,{\land}\,\tro_{\alpha_2\cdot\alpha}(\K)$ holds.
  In the former case, we can use the induction hypothesis to
  conclude that there exists some $\K' \in \TB_{\alpha_2\cdot\alpha}(\K)$ such
  that $\I \models \K_1 \land \K'$. Since $\K_1 \land
  \K' \in \TB_\alpha(\K)$ by definition, the claim follows.
The latter case is analogous.
For the converse, we assume that there exists some $\K' \in \TB_\alpha(\K)$
such that $\I \models \K'$.
By definition, this $\K'$ must be of the form $\K_1 \land \K''$
with $\K'' \in \TB_{\alpha_1\cdot\alpha}(\K)$, or of the form $\kbneg \K_1 \land \K''$
with $\K'' \in \TB_{\alpha_2\cdot\alpha}(\K)$. In the former case, it follows
 from the induction hypothesis that $\I \models \K_1 \land \tro_{\alpha_1\cdot\alpha}(\K)$, and hence
 $\I \models \big( \K_1 \land \tro_{\alpha_1\cdot\alpha}(\K) \big ) \lor \big( \kbneg \K_1 \land
 \tro_{\alpha_2\cdot\alpha}(\K) \big )$ and the claim follows. The second case,
where $\K'$ is of the form $\kbneg \K_1 \land \K''$, is analogous to the first
one.
\end{proof}

\begin{proof}[Proof of Theorem~\ref{thm:conp-hardness}]
  We employ the 3-Coloring problem for graphs. Assume a graph $G=(V,E)$ with
  $V=\{1,\ldots,n\}$. We construct in polynomial time a KB $\K$ and an action
  $\alpha$ such that $G$ is 3-colorable iff $\alpha$ is not
  $\K$- preserving. 
  For every $v\in V$, we use 3 concept names
  $A_v^{0},A_v^{1},A_v^{2}$ for the 3 possible colors of the vertex $v$. In
  addition, we employ a concept name $D$. Let $\K$ be the following
  KB:
  \[
    \textstyle
    \K=(D\ISA\neg D) \land
    \bigwedge_{ (v,v')\in E \land 0 \leq c \leq 2} (A_v^{c}\ISA \neg A_{v'}^{c}).
  \]

  It remains to define the action $\alpha$. For this we additionally use a
  nominal $\{o\}$ and fresh concept names $B_1,\ldots,B_n$. We let
  $\alpha:=\alpha_1 \alpha_2^{1} \cdots \alpha_2^{n} \alpha_3$, where
  \begin{enumerate}[\it (i)]
  \item $\alpha_1=(D\oplus \{o\})\cdot ( B_1\oplus \{o\})\cdots (B_n\oplus
    \{o\})$,
  \item $\alpha_2^{i}=(B_i\ominus A_{i}^0 )\cdot (B_i\ominus A_{i}^1)\cdot
    (B_i\ominus A_{i}^3)$, for all $i\in \{1,\ldots,n\}$, and
  \item $\alpha_3=(D\ominus B_1)\cdots (D\ominus B_n)$.
  \end{enumerate}

  Assume $\I$ is a model of $\K$ such that $S_\alpha(\I)\not\models \K$.
  It is possible to show that then $G$ is 3-colorable.

  Suppose $G$ is 3-colorable and a proper coloring of $G$ is given by a
  function $col:V\rightarrow \{0,1,2\}$. Take any interpretation $\I$ with
  $\Delta^{\I}=\{e\}$ and such that
  \begin{inparaenum}[\it (i)]
  \item $\{o\}^{\I}=e$,
  \item $D^{\I}=\emptyset$,
  \item $e\in (A_v^c)^{\I}$ iff $col(v)=c$.
  \end{inparaenum}
  Since $col$ is a proper coloring of $G$, $\I$ is a model of $\K$. As easily
  seen, $S_{\alpha}(\I)\not\models \K $.
\end{proof}

\begin{proof}[Proof of Theorem~\ref{thm:dllite-complexity}]
\np-hardness is immediate (e.g., by a reduction from propositional
satisfiability).
For membership in \np, we define a non-deterministic rewriting procedure that
transforms in polynomial time a \dlhp KB into a \dlh KB. 
We ensure that a \dlhp KB \K is finitely satisfiable iff there exists a
rewriting of $\K$ into a finitely satisfiable \dlh KB. As satisfiability
testing in \dlh is feasible in polynomial time, we obtain 
an \np upper bound for \dlhp.

Assume a \dlhp KB $\K$. The rewriting of $\K$ has two steps: first, we get rid
of the possible occurrences of $\lor$, and then of the complex concepts and
roles in assertions.

Let $P$ be the set of inclusions and assertions of $\K$. Non-deterministically
pick a set $M\subseteq P$ such that $M$ is a model of $\K$, when $\K$ is seen
as a propositional formula over $P$. Let $\K_M=\bigwedge_{\alpha\in M}\alpha
\land \bigwedge_{\alpha'\not\in M}\kbneg\alpha'$. Clearly, $\K$ is finitely
satisfiable iff we can choose an $M$ with $\K_M$ finitely satisfiable.


In the next step, we show how to obtain from $\K_M$ a \dlh KB.  Let $\curlyT$
be the set of inclusions that occur in $\K_M$ and let $\curlyA$ be the set of
assertions and their negations occurring in $\K_M$. Recall that the inclusions
of $\curlyT$ are inclusions of the standard \dlh, but the assertions in
$\curlyA$ may contain complex concepts.  We non-deterministically complete
$\curlyA$ with further assertions to explicate complex concepts and roles. A
\emph{completion} of $\curlyA$ is a $\subseteq$-minimal set $\ap$ of assertions
that is closed under the conditions in Figure~\ref{completion}.
\begin{figure}[t!]
\begin{boxedminipage}{15cm}
\begin{compactenum}[-]
  \item $\curlyA\subseteq \ap$;
  \item for every assertion $\alpha$, $\alpha\not\in\ap$ or
    $\kbneg\alpha\not\in\ap$;
  \item if $o$ is an individual from $\K_M$ and $C_1\ISA C_2\in \curlyT$, then
    $\kbneg (o: C_1) \in \ap$ or $o:C_2 \in \ap$;
  \item if $(o,o')$ are individuals from $\K_M$ and $r_1\ISA r_2\in \curlyT$,
    then $\kbneg ((o,o'): r_1 )\in \ap$ or $(o,o'):r_2 \in \ap$;
  \item if $o:C_1\AND C_2\in \ap$, then $o:C_1 \in \ap$ and $o:C_2 \,{\in}\, \ap$;
  \item if $o:C_1\OR C_2\in \ap$, then $o:C_1 \in \ap$ or $o:C_2 \in \ap$;
  \item if $o:\exists r.\top\in \ap$, then $(o,o'):r\in \ap$ for a fresh
    $o'$;

  \item if $o:\neg C \in \ap$, then $\kbneg (o: C) \in \ap$;
  \item if $\kbneg (o:C)\in \ap$, then $o:\neg C\in\ap$;

  \item if $o:\neg \neg C \in \ap$, then $o: C\in \ap$;
  \item if $o:\neg (C_1\AND C_2) \in \ap$, then $\kbneg (o: C_1)\in \ap$ or
    $\kbneg (o: C_2)\in \ap$;
  \item if $o:\neg (C_1\OR C_2) \in \ap$, then $\kbneg (o: C_1)\in \ap$ and
    $\kbneg (o: C_2)\in \ap$;
  \item if $o:\neg (\exists r.\top)\in\ap$, then $\kbneg ((o,o'):r\in \ap)$
    for all individuals $o'$ of $\ap$;
  \item if $(o,o'):r \in \ap$, then $(o',o):r^{-}\in \ap$;
  \item if $(o,o'):r_1\cup r_2\in \ap$, then $(o,o'):r_1\in \ap$ or
    $(o,o'):r_2 \in \ap$;
  \item if $(o,o'):r_1\setminus r_2\in \ap$, then $(o,o'):r_1\in \ap$ and
    $\kbneg ((o,o'):r_2) \in \ap$;
  \item if $\kbneg ((o,o'):r_1\cup r_2)\in \ap$, then $\kbneg
    ((o,o'):r_1)\in\ap$ and $\kbneg ((o,o'): r_2)\in\ap$;
  \item if $\kbneg ((o,o'):r_1 \setminus r_2)\in \ap$, then $\kbneg
    ((o,o'):r_1)\in \ap$ or $(o,o'):r_2 \in \ap$;
  \item if $o:\{o'\}\in \ap$, then $o=o'$;
  \item if $(o_1,o_2):\{(o_1',o_2')\}\in \ap$, then $o_1=o_1'$ and $o_2=o_2'$;
  \end{compactenum}
\end{boxedminipage}
\caption{Completion for \dlhp ABoxes \label{completion}}
\end{figure}
Let $\ap_{b}$ be the restriction of $\ap$ to basic assertions. Clearly,
$\bigwedge \curlyT\land \bigwedge \ap_{b} $ is a \dlh KB. It is not difficult
to see that $\K_M$ is finitely satisfiable iff there exists a completion $\ap$
such that $\bigwedge \curlyT\land \bigwedge \ap_{b} $ is finitely satisfiable.
\end{proof}

\begin{proof}[Proof of Theorem~\ref{thm:reduction-dllite}]
The lower bound follows from Theorem~\ref{thm:conp-hardness}, or
alternatively, it can be proved by a reduction from finite unsatisfiability in
\dlhp, employing the same reduction as in the proof of
Theorem~\ref{thm:verification-nexp}.

\added[MO]
{For the upper bound, assume a \dlhp KB $\K$ and a simple action $\alpha$.
We proceed analogously to the Proof of \ref{thm:verification-nexp}.
From Theorem~\ref{thm:verification} we know that
$\alpha$ is not $\K$-preserving iff
$\K\land \kbneg \STR_{\alpha^{*}}(\K)$ is finitely satisfiable. Moreover,
we have shown that $\K\land \kbneg \STR_{\alpha^{*}}(\K)$ is finitely
satisfiable iff there exists a $\K' \in \TB_{\alpha^{*}}(\K)$ such that $\K
\land \K'$ is not finitely satisfiable, and $\K'$
can be obtained non-deterministically in polynomial time and is of size
polynomial in $\alpha$ and $\K$.
The KB $\K'$
is not a \dlhp KB, but it can be transformed into an equisatisfiable \dlhp KB
in linear time. To this end, turn $\K'$ into negation normal form, i.e., push
$\kbneg$ inside so that $\kbneg$ occurs in front of inclusions and assertions
only. Then replace every occurrence of $\kbneg (B_1\ISA B_2)$ and $\kbneg
(r_1\ISA r_2)$ in the resulting $\K'$ by $o: B_1\AND \neg B_2$ and $(o,o'):
r_1\setminus r_2$, respectively, where $o,o'$ are fresh individuals. Clearly,
the above transformations preserve satisfiability. Moreover, since in $\K$ the
operator $\kbneg$ may occur only in front of assertions, and $\alpha$ is
simple, every inclusion in the resulting $\K'$ already appears in $\K$. This
implies that $\K'$ is a \dlhp KB as desired.}
\end{proof}

\begin{proof}[Proof of Theorem~\ref{thm:plan-existence-undecidable}]
The proof is by reduction from the Halting problem. We reduce to (P1) and to
(P2) deciding whether a deterministic Turing machine $M$ accepts a word $w \in
\{0,1\}^*$.

For (P1), assume $M$ is given by a tuple $M=(Q,\delta,q_0,q_a,q_r)$, where $Q$
is a set of states, $\delta: \{0,1,b\}\times Q \rightarrow \{0,1,b\}\times
Q\times \{+1,-1\}$ is the transition function, $b$ is the blank symbol,
$q_0\in Q$ is the initial state, $q_a\in Q$ is the accepting state, and
$q_r\in Q$ is the rejecting state. We can assume w.l.o.g.\ that after
accepting or rejecting the input the machine returns the read/write head to
the initial position.

Intuitively, we define an action that implements the effects of each possible
transition from $\delta$.  We also have a pair of actions that ``extend'' the
tape with blank symbols as needed. For the reduction we use the role $next$,
concept names $Sym_0,Sym_1,Sym_b$, and $St_q$ for each $q\in Q$.

The set $\Act$ of actions is defined as follows.  For every $(\sigma,q)\in
\{0,1,b\}\times Q$ with $\delta(\sigma,q)=(\sigma',q',D)$ we have the action
$\alpha_{\sigma,q}=(x_1,x_2):next \land x_2:Sym_\sigma\land x_2:St_q \land
(x_2,x_3):next \,? \, (Sym_\sigma \ominus \{x_2\} )( Sym_{\sigma'} \oplus
\{x_2\} )( St_q \ominus \{x_2\} )(St_{q'} \oplus \{x_{2+D}\})$. To extend the
tape with blank symbols, we have the actions $\alpha_r$ and $\alpha_l$. In
particular, $\alpha_r= x:(Sym_0\OR Sym_1 \OR Sym_b) \land y:\neg (Sym_0\OR
Sym_1 \OR Sym_b) ? (next \oplus \{(x,y)\})(Sym_b \oplus \{y\})\}$. The action
$\alpha_l$ is obtained from $\alpha_r$ by replacing $(next \oplus \{(x,y)\})$
with $(next \oplus \{(y,x)\})$. We finally have an initialization action
$\alpha_{init} $ which stores the initial configuration of $M$ in the
database.  In particular, $\alpha_{init} = (a_1: \neg (Sym_0 \OR Sym_1 \OR
Sym_b) ) ? (Sym_{\sigma_1}\oplus \{a_1\} )\cdots (Sym_{\sigma_m}\oplus \{a_m\}
) (St_{q_0} \oplus \{a_1\})$, where $ \sigma_1\cdots \sigma_m = w$. We let $
\K = a_1\,{:}\,St_{q_a}\OR St_{q_r}$  and the initial database $\I$ is empty,
i.e. no domain element participates in a concept or a role.

It can be easily seen that the reduction is correct.  If $\K$ has a plan, then
$M$ halts on $w$. Conversely, if $M$ halts on $w$, then it halts within some
number of steps $s$. One can verify that expanding the domain of $\I$ with $s$
fresh elements is sufficient to find  a plan for $\K$ using the actions in $\Act$.

The above reduction also applies to (P2). It suffices to define a precondition
KB $\K_\mathit{pre}$ that describes the above $\I$. Simply let $\K_\mathit{pre}$ be the
conjunction of $(Sym_0\OR Sym_1 \OR Sym_b\OR\exists next \OR \exists next^{-}
\ISA \bot)$ and $ \bigsqcup_{q\in Q}St_q\ISA \bot$.
\end{proof}

\begin{proof}[Proof of Theorem~\ref{thm:finite-plan-existence-p1-pspace}]
The lower bound can be proven by an easy adaption of the reduction in
Theorem~\ref{thm:plan-existence-undecidable}.

For the upper bound we employ a non-deterministic polynomial space procedure
that stores in memory a finite interpretation and non-deterministically applies
actions until the goal is satisfied. Since the domain of each candidate
interpretation is fixed and of size linear in the input, each of them can be
represented in polynomial space. The number of possible interpretations is
bounded by $c=2^{r \cdot d^2 + c \cdot d}$, where $r$ and $c$ are respectively
the number of concepts and roles appearing in the input set of actions, and $d$
is the cardinality of the domain of the initial interpretation. Thus the
procedure can be terminated after $c$ many steps, without loss of completeness.
We note that a counter that counts up to $c$ can be implemented in polynomial
space, and that model checking \OURDL-formulae is feasible in polynomial space.
\end{proof}

\begin{proof}[Proof of Theorem~\ref{thm:finite-plan-existence-p2-nexptime}]
The lower bounds can be immediately inferred from the complexity of static
verification with KBs in  \OURDL (Theorem~\ref{thm:verification-nexp})  and
\dlhp (Theorem~\ref{thm:conp-hardness}).

For the upper bounds, 
we first guess a variable substitution $\sigma$ and a sequence
$P=\tuple{\alpha_1,\ldots,\alpha_m}$ of at most $k$ actions.  By
Theorem~\ref{thm:regression}, it follows that $P$ is a plan as desired iff
$\sigma(\K_\mathit{pre})\land \STR_{\alpha_1 \cdots \alpha_m}(\sigma(\K))$
is finitely satisfiable.
\added[MO]{To be able to check the finite satisfiability of
$\sigma(\K_\mathit{pre})\land \STR_{\alpha_1 \cdots \alpha_m}(\sigma(\K))$
within the desired bounds, we proceed similarly as above, and  consider a procedure that
non-deterministically builds a polynomial $\K'$ such that
$\sigma(\K_\mathit{pre})\land \K'$ is finitely satisfiable iff
$\sigma(\K_\mathit{pre})\land \STR_{\alpha_1 \cdots \alpha_m}(\sigma(\K))$ is
finitely satisfiable. }\todo{the rest of the proof is new}
Note that the core difference between this proof and the ones of
Theorems~\ref{thm:verification-nexp} and~\ref{thm:reduction-dllite} is that
now the formula $\STR_{\alpha_1 \cdots \alpha_m}(\sigma(\K))$ is not negated
and hence, intuitively, we need to decide the existence of
an interpretation that satisfies the negation of all formulas  in
$\TB_{\alpha}(\K)$, rather than satisfying just one of them.

We define a set of KBs $\TBC_\alpha(\K)$ that is similar to
$\TB_\alpha(\K)$, but contains the negation of the formulas in the latter,
and uses conjunction rather than implications for the conditional axioms.
\begin{align*}
  \TBC_\varepsilon(\K)= &  \{ \K \} \\
   \TBC_{(A \oplus C)\cdot
     \alpha}(\K)= &  \{ \K'_{A\gets A\OR C} \mid \K' \in \TBC_{\alpha}(\K)\} \\
   \TBC_{(A \ominus C)\cdot
     \alpha}(\K)= &  \{ \K'_{A\gets A\AND \neg C} \mid \K' \in \TBC_{\alpha}(\K)\} \\
   \TBC_{(p \oplus r)\cdot
     \alpha}(\K)= &  \{ \K'_{p\gets p\cup r} \mid \K' \in
   \TBC_{\alpha}(\K) \}  \\
   \TBC_{(p \ominus r)\cdot
     \alpha}(\K)= & \{ \K'_{p\gets p\setminus r} \mid \K' \in \TBC_{\alpha}(\K)\} \\
   \TBC_{(\cact{\K_1}{\alpha_1}{\alpha_2})\cdot
        \alpha}(\K) = &  \{ \K_1 \,{\land}\, \K' \mid \K' \in
      \TBC_{\alpha_1\cdot\alpha}(\K)  \} \cup \{ \kbneg \K_1 \,{\land}\, \K'
      \mid \K' \in \TBC_{\alpha_2\cdot\alpha}(\K) \}
\end{align*}
Similarly as above, $|\TBC_\alpha(\K)|$ may be
exponential but each $\K' \in \TBC_\alpha(\K)$ is polynomial and can be built
non-deterministically in polynomial time.
We show below the following claim:
\begin{compactitem}
\item [$(\ddagger)$]
For every $\I$ and every $\K$, there exists some $\K' \in
\TBC_{\alpha}(\K)$ such that $\I \models \K'$ iff   $\I \models \STR_{\alpha}(\K)$.
\end{compactitem}
With $(\ddagger)$ we can easily show that
$\sigma(\K_\mathit{pre})\land \STR_{\alpha_1 \cdots \alpha_m}(\sigma(\K))$ is
finitely satisfiable iff there exists some  $\K' \in \TBC_{\alpha_1 \cdots \alpha_m}(\sigma(\K))$ such that
$\sigma(\K_\mathit{pre})\land \K'$ is finitely satisfiable.
For the `only if' direction, assume $\sigma(\K_\mathit{pre})\land \STR_{\alpha_1 \cdots \alpha_m}(\sigma(\K))$ is
finitely satisfiable. Then there exists some finite $\I$ such that $\I \models
\sigma(\K_\mathit{pre})$ and $\I \models \STR_{\alpha_1 \cdots
  \alpha_m}(\sigma(\K))$. By $(\ddagger)$, for this $\I$ there is some $\K' \in
\TBC_{\alpha_1 \cdots \alpha_m}(\sigma(\K))$ such that $\I \models \K'$ iff $\I \models \STR_{\alpha_1 \cdots
  \alpha_m}(\sigma(\K))$. We choose this $\K'$. It follows that $\I \models \K'$ and, since $\I \models
\sigma(\K_\mathit{pre})$, we can conclude that $\sigma(\K_\mathit{pre})\land \K'$ is finitely satisfiable.
For the other direction, assume that there is no  $\K' \in \TBC_{\alpha_1 \cdots \alpha_m}(\sigma(\K))$ such that
$\sigma(\K_\mathit{pre})\land \K'$ is finitely satisfiable. Then it follows
that: $(*)$ $\I
\not\models \K'$ for every $\K' \in \TBC_{\alpha_1 \cdots
  \alpha_m}(\sigma(\K))$ and every $\I$ with $\I \models
\sigma(\K_\mathit{pre})$. Assume towards a contradiction that
$\sigma(\K_\mathit{pre})\land \STR_{\alpha_1 \cdots \alpha_m}(\sigma(\K))$ is
 satisfiable. Then there is some $\I$ with  $\I \models
\sigma(\K_\mathit{pre})$ and $\I \models \STR_{\alpha_1 \cdots \alpha_m}(\sigma(\K))$,
and by $(\ddagger)$,
for this $\I$ there is some $\K' \in \TBC_{\alpha_1 \cdots
  \alpha_m}(\sigma(\K))$ such that $\I \models \K'$  iff
$\I \models \STR_{\alpha_1 \cdots \alpha_m}(\sigma(\K))$. This would imply that
$\I \models \K'$, contradicting $(*)$.
Having shown this, the upper bound follows directly from the complexity of deciding finite
satisfiability of $\sigma(\K_\mathit{pre})\land \K'$, and the fact that $\K'$
is of polynomial size and can be obtained non-deterministically in polynomial
time.

It is only left to show $(\ddagger)$, what we do by induction on $s(\alpha)$.
The base case is trivial, since for $\alpha =\varepsilon$ we have
$\TBC_{\alpha}(\K) = \{ \K \}$ and $\STR_{\alpha}(\K) = \K$,
so we can set $\K' = \K$ and the claim follows.

For the case of $\alpha= A \oplus C \cdot \alpha'$, we have
$ \STR_{\alpha'}(\K)  =  \STR_{\alpha'}(\K)$.
By induction hypothesis
there is some $\K'' \in \TBC_{\alpha'}(\K)$ such that $\I \models \K''$
iff $\I \models \STR_{\alpha'}(\K)$. We let $\K' = \K''_{A \gets A \OR
  C}$. Then $\K' \in \TBC_{\alpha}(\K)$, and $\I  \models \K'$
iff $\I \models \STR_{\alpha'}(\K)_{A \gets A \OR C}$
as desired.
The cases of  $\alpha  = (A \ominus C) \cdot \alpha'$,   $\alpha  = (p \oplus
r) \cdot \alpha'$,  and   $\alpha  = (p \ominus r) \cdot \alpha'$ are
analogous.

Finally, if  $\alpha= (\cact{\K_1}{\alpha_1}{\alpha_2}) \cdot \alpha'$, the
choice of $\K'$ depends on $\I$. We distinguish two cases:
\begin{compactitem}
\item
If $\I \models \K_1$, let $\K'' \in
\TBC_{\alpha_1\cdot\alpha'}(\K)$  be such that $\I \models \K''$ iff $\I \models
\STR_{\alpha_1\cdot\alpha'}(\K)$, which exists the induction hypothesis.
Then we set $\K' = \K_1 \land \K''$. We have  $\K'' \in
\TBC_{\alpha}(\K)$ by definition. Now we show that $\I \models \K'$ iff $\I
\models \STR_{\alpha}(\K)$.

 Assume $\I \models \K'$.
Then $\I \models \K''$,
and $\I \models \STR_{\alpha_1\cdot\alpha'}(\K)$.
This ensures that $\I \models \kbneg\K_1
\,{\lor}\,\STR_{\alpha_1\cdot\alpha'}(\K)$.
Since $\I \models \K_1$, we also have $\I \models \K_1
\,{\lor}\,\STR_{\alpha_2\cdot\alpha}(\K)$.
Since $\STR_{\alpha}(\K)  =
      (\kbneg\K_1 \,{\lor}\,\STR_{\alpha_1\cdot\alpha'}(\K)) \land {}
       (\K_1 \,{\lor}\,\STR_{\alpha_2\cdot\alpha'}(\K))$, we obtain $\I \models
       \STR_{(\cact{\K_1}{\alpha_1}{\alpha_2})\cdot
        \alpha'}(\K)$ as desired.

For the converse, assume $\I
\models \STR_{\alpha}(\K)$, that is,
$\I
\models  \kbneg\K_1 \,{\lor}\,\STR_{\alpha_1\cdot\alpha'}(\K)$ and
$\I
\models \K_1 \,{\lor}\,\STR_{\alpha_2\cdot\alpha'}(\K)$.
From the former and $\I \models \K_1$, it follows that
$\I \models  \STR_{\alpha_1\cdot\alpha'}(\K)$. By our selection of $\K''$,
this implies $\I \models \K''$, and we also have that $\I \models \K_1$, we
can conclude $\I \models \K'$ as desired.
\item Otherwise,  if $\I \models \kbneg \K_1$,
let $\K''$ be such that  $\K'' \in
\TBC_{\alpha_2\cdot\alpha'}(\K)$ and $\I \models \K''$ iff $\I \models
\STR_{\alpha_2\cdot\alpha'}(\K)$ (such a $\K''$ exists by the induction hypothesis), and let
$\K' = \kbneg \K_1 \land \K''$. Then $\K'' \in
\TBC_{\alpha}(\K)$, and the  proof of $\I \models \K'$ iff $\I
\models \STR_{\alpha}(\K)$ is analogous to the first case.
\end{compactitem}\end{proof}

\todo{check if problem names in appendix are consistent with body of the paper}

\begin{proof}[Proof of Theorem~\ref{thm:planning-complexity}]
Problem (S) can be shown to be undecidable by employing the same reduction as
for (P2) in~Theorem~\ref{thm:plan-existence-undecidable}.
The co\nexptime lower bounds for (C) and (S\textsubscript{b}) trivially follow from finite
  satisfiability in \OURDL.

\added[MO]{For the upper bounds, we first observe that (C) reduces to validity testing
in \OURDL: an instance of (C) (as described above) is positive iff the formula
$\sigma(\K_\mathit{pre}')\rightarrow \STR_{\alpha_1\cdots\alpha_n}(\sigma(\K'))$ is
 valid, where $\K_\mathit{pre}',\K'$ are obtained from $\K_\mathit{pre},\K$ by replacing
 every variable by a fresh individual.
Deciding validity of $\sigma(\K_\mathit{pre}')\rightarrow
\STR_{\alpha_1\cdots\alpha_n}(\sigma(\K'))$ in turn reduces to deciding
whether $\sigma(\K_\mathit{pre}')\land \kbneg
\STR_{\alpha_1\cdots\alpha_n}(\sigma(\K'))$ is finitely unsatisfiable.
The upper bounds for (C) then follow from the \np and \nexptime upperbounds for the
satisfiability of KBs of the form $\K' \land \kbneg \STR_{\alpha}(\K)$ shown
in the proofs of Theorems~\ref{thm:verification-nexp}
and~\ref{thm:reduction-dllite}.}

Negative instances of (S\textsubscript{b}),
where $\K_\mathit{pre}$ is the precondition and $\K$ is the goal, can be recognized in
\nexptime. Such a test comprises building an exponentially large set of all
candidate action sequences of length at most $k$, and then making sure that
that each candidate is invalidated. That is, each candidate action
sequence $P$ induces an instance of (C), which can be shown
negative in \nexptime.
In the case of \dlhp and simple actions, 
we can guess non-deterministically a sequence of actions  of length at most
$k$ and then check that the induced instance of (C) is positive, which is a
test in co\np. It is not difficult to see that the $\np^\np$ upper bound is
tight. This can be shown by a polynomial time reduction from evaluating QBFs
of the form $\gamma=\exists p_1 \ldots \exists p_n \forall q_1 \ldots \forall
q_m. \psi$, where $\psi$ is a Boolean combination over propositional variables
$V=\{p_1, \ldots , p_n, q_1, \ldots ,q_m\}$. We can assume that negation in
$\psi$ occurs in front of propositional variables only.  For the reduction to
(S\textsubscript{b}), we employ concept names $T$ and $F$, and individual names $o_v$ for each
propositional variable $v\in V$. We let $\K_\mathit{pre}= \big ( \bigwedge_{1 \leq i \leq n
} o_{p_i}\,{:}\,\neg ( T \OR F ) \big) \land \big( \bigwedge_{1 \leq i \leq m } o_{q_i}\,{:}\,( T
\OR F ) \AND (\neg T \OR \neg F ) \big )$.  Intuitively, each initial
interpretation encodes an assignment for the variables $q_1, \ldots , q_m$,
but does not say anything about $p_1, \ldots , p_n$.
The latter is determined by choosing a
candidate plan. To this end, for each $1\leq i \leq n$, we construct the
following actions:
\[
  \alpha_{i}= o_{p_i}\,{:}\,\neg F \,?\, T\oplus
  \{o_{p_i}\},
  \qquad
  \alpha_{i}'= o_{p_i}\,{:}\,\neg T \,?\, F\oplus \{o_{p_i}\}.
\]
We
finally let $k=n$ and let $\K$ be the KB obtained from $\psi$ by replacing
each negative literal $\neg v$ by $o_v\,{:}\,F$ and each positive literal $v$ by
$o_v\,{:}\,T$. It is not difficult to see that $\gamma$ evaluates to \emph{true}
iff the constructed instance of (S\textsubscript{b}) is positive.
\end{proof}


\end{document}

\endinput
